\newtheorem{theorem}{Theorem}
\newtheorem{lemma}{Lemma}
\newtheorem{corollary}{Corollary}[theorem] 
\DeclareMathOperator*{\argmax}{arg\,max}
\title{Investigating the Benefits of Nonlinear Action Maps in Data-Driven Teleoperation}
\author {
    Michael Przystupa\textsuperscript{\rm 1,\rm 2},
    Gauthier Gidel \textsuperscript{\rm 3},
    Matthew E.~Taylor \textsuperscript{\rm 2,5}, \\
    Martin Jagersand \textsuperscript{\rm 2},
    Justus Piater \textsuperscript{\rm 4},
    Samuele Tosatto \textsuperscript{\rm 4}
}
\begin{document}

\maketitle

\begin{abstract}
As robots become more common for both able-bodied individuals and those living with a disability, it is increasingly important that lay people be able to drive multi-degree-of-freedom platforms with low-dimensional controllers. One approach is to use state-conditioned action mapping methods to learn mappings between low-dimensional controllers and high DOF manipulators -- prior research suggests these mappings can simplify the teleoperation experience for users. Recent works suggest that neural networks predicting a local linear function are superior to the typical end-to-end multi-layer perceptrons because they allow users to more easily undo actions, providing more control over the system. However, local linear models assume actions exist on a linear subspace and may not capture nuanced actions in training data. We observe that the benefit of these mappings is being an odd function concerning user actions, and propose end-to-end nonlinear action maps which achieve this property. Unfortunately, our experiments show that such modifications offer minimal advantages over previous solutions. We find that nonlinear odd functions behave linearly  for most of the control space, suggesting architecture structure improvements are not the primary factor in data-driven teleoperation. Our results suggest other avenues, such as data augmentation techniques and analysis of human behavior, are necessary for action maps to become practical in real-world applications, such as in assistive robotics to improve the quality of life of people living with w disability.   

\end{abstract}

\section{Introduction} \label{sec:introduction}

\begin{figure}
\includegraphics[width=0.5\textwidth,,trim=10mm 2mm 2mm 0mm, clip]{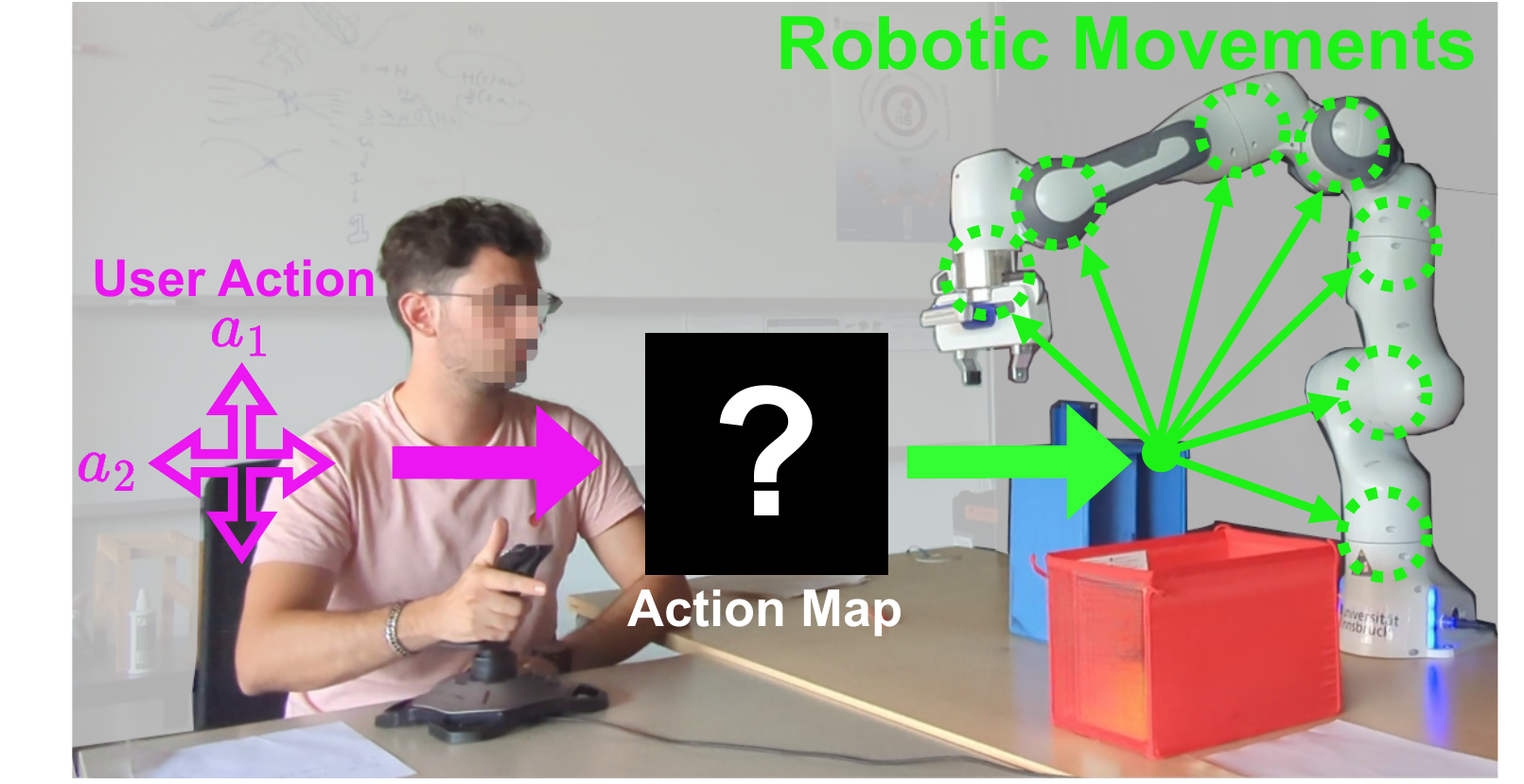}
  \caption{User teleoperating robot with data driven interface.}
  \label{fig:teaser}
\end{figure}

The choice of coordinate system in teleoperation can dramatically affect the difficulty of robotic manipulation problems. For example, wiping a table is often easier to control directly in the corresponding 2D Cartesian plane with respect to a table, as opposed to the original N-dimensional joint space of the robotic arm. This observation motivates typical \textit{mode switching} mappings, in which users control subsets of the robot pose at any time \cite{newman2022harmonic}. These mappings are the de facto scheme in assistive robotic systems and are often intuitive because user inputs map to the coordinate system of the robot's end-effector. Such interfaces are valuable for users with limited mobility, restricting their teleoperation interfaces' available degrees of freedom.

Unfortunately, results suggest that mode switching can be mentally taxing, involving up to 30 -- 60 mode switches to complete everyday tasks like opening doors \cite{herlant2016timeoptimalswitching,routhier2010usability}. Such mappings are challenging to scale to complex teleoperation scenarios (e.g., parallel robotics). Mode switching's limitations have motivated research on data-driven action mapping algorithms that project low-dimensional actions into high-dimensional control commands \cite{losey2021learnlatentLONGpaper,losey2019controlling} and are learned with demonstration data. Figure~\ref{fig:teaser} provides  a visualizatoin of action maps at deployment.

Early work focused on linear action maps to reconstruct the relevant action space. Linear methods are motivated by results modeling hand grasps, where one can approximate 80\% hand-poses with only the first two loading vectors found via principal component analysis \cite{santello1998posturalhands}. Researchers have previously applied linear action maps in robotic control algorithms \cite{artemiadis2010emgbasecontrol,ciocarlie2009handposture,odest2007twoDsubspace,matrone2012realtimemyoelectric}.

Recent work has proposed \textit{state conditioned} action maps which adapt high-dimension commands given task-context information. A conditional autoencoder (CAE) learns to reconstruct high-dimension manipulations from states and a latent variable that maps a user's input. Several aspects have been studied, including user experience \cite{jeon2020SharedAW,Li2020LearningUM}, efficient data collection \cite{mehta2022LatentActionwoHuman}, and complex data conditioning (e.g., images or text) \cite{karamcheti2022LILa,karamcheti2021learningvisuallyguided}.  Beyond assistive robotics, researchers have loosely studied CAE action maps in reinforcement learning (RL) for sample efficiency \cite{allshire2022laser,chandak2019learningactionrepr}, offline RL \cite{zhou2020PLAS}, and learning hand-synergies \cite{he2022discoveringsynergies}.

A limitation of the CAE approach is tying the quality of the demonstration data with the functional teleoperation properties \cite{losey2019controlling}. Relying on data to ``discover" functional teleoperation properties is naive because human teleoperation priors are already known. \citet{Li2020LearningUM} exploits this observation to propose regularizer terms to help learn a remapping model between an action map and the user's preferences. \citet{przystupa2023SCL} investigated enforcing \textit{teleoperation reversibility}, where users expect symmetrically opposite actions to undo prior movements. The authors proposed \textit{soft reversibility} to bound errors when reversing actions. State-conditioned maps vary over time with fixed user inputs, and performing the opposite actions does not necessarily map to the inverse movements.

In particular, \citet{przystupa2023SCL} propose learning a \textit{local linear} model instead of the typical multi-layer perceptron. Their model uses the state to generate a hypothesis linear function that combines user actions into robot commands. Their proof of soft-reversibility relies on linear functions being odd, but this core assumption does not necessarily mean that a function must be linear. One could construct odd non-linear functions that may more faithfully reconstruct actions, which could improve user experience.

These observations motivate our investigation of whether alternative nonlinear models achieve better user performance in teleoperation. We study this by proposing a regularization approach to imbue a typical end-to-end decoder with soft reversibility and extending the architecture proposed by \citet{przystupa2023SCL}. We evaluate both in simulation and with a user study. 

We hypothesize that \textit{any soft-reversible action map, and therefore an odd function, will lead to similar user experiences in a single robot arm setting}. This hypothesis is motivated by analysis of the first-order Taylor approximation of a function. As our results will show, nonlinear action maps behave similarly to their linearization for relevant domains of the user actions, and a user study demonstrates limited statistical differences between models during a teleoperation task both in user subjective metrics and objective success rates. 

The remainder of this paper outlines our findings in investigating this hypothesis. We include a background section to summarize the relevant aspects of action mapping learning and modeling assumptions we make. We then explain our proposed alternatives for learning odd action mappings concerning actions and show experiments validating our hypothesis. We further include soft-reversibility results extending the work of \citet{przystupa2023SCL} in the appendix. These contributions are not the focus of this paper.

Our work is a necessary step forward in developing data-driven interfaces by directly assessing model design. Many works have tackled issues of model input features and collection \cite{mehta2022LatentActionwoHuman,karamcheti2021learningvisuallyguided,karamcheti2022LILa,jeon2020SharedAW,Li2020LearningUM}, but the impact of model structure has received limited attention \cite{przystupa2023SCL}. Local Linear models are easier to analyze, so if they are sufficient for teleoperation, this paves the way for accessibility of existing system analysis tools \cite{Lavalle2006PlanningAlgs}. Applying such tools could guarantee properties previously proposed as necessary in data-driven teleoperation systems but have only been loosely empirically demonstrated as being enforced \cite{losey2019controlling,losey2021learnlatentLONGpaper}. As data-driven teleoperation involves human users, concrete guarantees of how the learning system will behave are necessary for any ethical approval of real-world use outside academic research. 

\section{Background}\label{sec:background}
Previous works on data-driven teleoperation frame action mappings as Markov decision processes (MDP) where each action occurs at discrete time steps \cite{losey2021learnlatentLONGpaper}. Instead, we describe action mappings as a controllable dynamical system over continuous time. Our theoretical analysis in the appendix utilizes this framework to interpret the gradient flow induced by user actions. We then discuss the general overview of a typical deep learning action mapping framework and describe soft reversibility. These latter topics are the most important for our work described in this paper. 

\subsection{Dynamical Control Systems}\
In the teleoperation settings, we essentially have a first-order ordinary differential system of equations:
\begin{equation}\label{eqn:dynamicalsystem}
    \dot{x}(t) = f_{\theta}(x(t), a(t)),
\end{equation}
where $x(t) \in \mathbf{R}^{d}$ are the state variables, $a(t) \in R^n$ are user actions, time $t \in \mathbf{R}^+$, and $f_{\theta}: X \times A \rightarrow \dot{X}$ is the learned mapping from states $X$ and user actions $A$ to robot velocities $\dot{X}$. $f$ implicitly depends on time through $x$ and $a$ and dimensions $n \ll d$.

We assume that for teleoperation, the dynamical system evolves following Euler's method:
\begin{equation} \label{eqn:euler}
    x_{T}(k+ 1) = x_{T}(k) + \nu f(x_{T}(k), a_{T}(k))
\end{equation}
where we introduce variables with a fixed discretization duration $T$ steps (e.g. $x_T(k) = x(k/T)$) and $\nu \in \mathbf{R}^+$ is the update rate during integration. We consider these modeling assumptions reasonable as the teleoperation context is a closed-loop system, where the trajectory is determined online by the internal policy $\pi(o_{h}) = a$ of a human operating unknown $o_h$ world observations.

\subsection{Action Mapping Methods}

Action mapping methods learn intermediate models that some agent (a human in teleoperation) interacts with a robotic system. We drop the notation for variables dependent on time for succinctness and because it is irrelevant for learning action mapping models. We assume access to the dataset $D = \{(x, \dot{x}) : (x, \dot{x}) \sim p(x'|x, \dot{x})\pi(\dot{x}|x) \}$ generated by some policy $\pi$ for the task of interest to control with dynamics $p(x' | x, \dot{x})$. In practice, a user can collect demonstration through kinesthetic demonstration or unsupervised methods \cite{mehta2022LatentActionwoHuman}.

A user can then train a CAE model with the data by optimizing some variation of the following loss $L(x, \dot{x}, f, g)$:
\begin{equation}
    L^{\text{\it{recon}}}(\dot{x}, f(x, g(x, \dot{x}))) + L^{\text{\it{reg}}}(f, g, x, \dot{x}).
\end{equation}
The mean square error is sufficient for action reconstruction in practice $L^{\text{\it{recon}}} = \frac{1}{2}||\dot{x} - f(x, g(x, \dot{x}))||^2_2$. Previous work has considered different regularizers ($L^{\text{\it{reg}}}$), including the Kullback-Liebler divergence or latent dynamics models \cite{allshire2022laser,losey2019controlling}. Our results will show that regularizers that encode desirable teleoperation properties can be beneficial in giving users control of the mapping function. 

The typical CAE model includes neural encoder $g_\theta(x, \dot{x}) = a$ that generates the low-dimensional actions $a$ and a decoder $f_\theta(x, a) = \dot{x}$ that reconstructs the original robotic commands. An important distinction is the notion of end-to-end versus hyper-linear decoders. End-to-end encoders rely on the additive interactions between $x$ and $a$ by concatenating both variables and providing them as input to a multi-layer perceptron (MLP). A hyper-linear decoder is a form of \textit{hypernetwork} \cite{hypernetworksHa2016} that decomposes action and context interactions; given the robotic context $x$,  a local linear function is predicted to reconstruct actions: $f_\theta(x, a) = h_\theta(x)a$, where $h_\theta(x) = H \in R^{d \times n}$. Previous work suggests these give more control to a user, unlike end-to-end encoders, which rely on the neural network to encode any properties for teleoperation \cite{przystupa2023SCL}. 

\subsection{Soft Reversibility}

Soft reversibility is a property of actions mapping algorithms proposed by \citet{przystupa2023SCL}. The key observation is that for three states $x_i$, $x_{i+1}$, $x_{i+2}$ following Equation~\ref{eqn:euler}, if $a_{i+1} = -a_i$ then $\| x_i - x_{i+2}\| < \| x_{i+1} - x_i\|$. The result models the prior assumption that humans expect joystick actions to \textit{undo} prior actions when pushing in the corresponding opposite direction. A limitation of this result is that it assumes discrete steps and only reverses a single triplet of state action pairs. We extend these results in the appendix to account for trajectories of actions.
\section{Nonlinear State Action Maps}\label{sec:methods}

This section describes two ways to learn action mappings that lead to odd functional behavior -- as required for soft-reversibility -- that do not assume a linear function. The first approach adds regularizers to an end-to-end decoder to encourage the model to be odd in the actions and maps zero actions to zero movements. This approach aligns with the typical nonlinear black box by controlling the model through the loss function. The second method generalizes the hyper-linear hypernetwork through additional local action map assumptions. We enforce the upper bound on both approaches Lipschitz constant with the learning algorithm of \citet{gouk2020regularisation} and include pseudo code in the appendix.

\subsection{Regularizing Teleoperation Behavior}

A natural means to enforce teleoperation properties in a neural network is the specification of the loss function. Through this approach we propose the following loss function $L^{reg}(g, f,  x, \dot{x}) $:
\begin{equation}
    ||(-\dot{x}) - f(x, -a)||^2_2 + ||f(x, 0)||^2_2 + ||g(x, 0)||^2_2.
\end{equation}
Our proposed regularizer achieves two behaviors: (1) enforcing odd functional behavior to enable reversing and (2) encouraging that zero actions map to zero outputs. We include this regularizer in each mini-batch by adding $\dot{x} = 0$ and inverse action $-a$ with the state inputs in the batch.

Our proposed regularizer is advantageous compared to alternative approaches due to its simplicity. It is always applicable during training because it only requires predictions generated in the CAE framework. The loss terms of \citet{Li2020LearningUM} need kinematics knowledge to apply, and other works found they did not improve CAE behavior in simulated reversibility \cite{przystupa2023SCL}. Our loss function also defines expected latent space behaviors. In contrast, reported regularizer terms only compress the action space \cite{losey2019controlling} or constrain a black-box dynamics model \cite{allshire2022laser}. One limitation of this approach is that the odd function behavior is proximal, introducing residual errors in the predictions. Our theoretical results in the appendix show how this can be problematic. 


\subsection{State Conditioned Nonlinear Maps}
An alternative to changing the loss function is decomposing the interactions between context conditioning and the action inputs model. The hyper-linear decoders discussed are an example of this, where the action inputs interact with a function conditioned on state, as opposed to directly interacting with the modifications of the state in the model. The trade-off with this approach is that it assumes all actions for a given context exist on a linear manifold. We propose \textit{State Conditioned Nonlinear Maps} (SCN), which generalize the hyper-linear model to allow for several layers which expand the function class to nonlinear odd functions, which allows them to be soft-reversible. The model inductive bias means it can be incompatible with learning certain functions (e.g., asymmetrical functions of the inputs).

\subsubsection{Tensor Layers}
One approach for hyper-linear layers is to reshape a vector output into a matrix. We point out that this reshape operation is not necessary, and one can instead interpret the hyper-linear layer as a 3D tensor product: 
\begin{equation}
    W^i(x) =  H^{i} \otimes \phi_\theta(x)  + B^{i}.
\end{equation}
Where $W(x) \in R^{h\times n}$ is the resulting matrix after tensor product $\otimes$ between $H \in \mathbf{R}^{h \times w \times n}$ and nonlinear projection of $\phi_{\theta}(x) \in \mathbf{R}^w$ states with matrix bias $B^{i} \in \mathbf{R}^{h \times n}$. An SCN model is a composition of these tensor layers, where a single hidden layer and activation function $\sigma$ looks like:
\begin{equation}
    \hat{x} = W^{2}(x)\sigma(W^{1}(x)a).
\end{equation}
An important observation is that this model \textbf{does not} have any additional bias terms concerning the user actions. Removing this bias term is the simplest means of enforcing the model to be odd in the user actions.

\subsubsection{Architecture Constraints}

Unfortunately, additional architecture constraints are necessary with SCN. Tensors can quickly increase the model parameter counts if set naively, and keeping the same model size as end-to-end models means fewer neurons per hidden layer. We find sharing the features of $\phi_{\theta}(x)$ between layers sufficient. 

Furthermore, we cannot use arbitrary activation functions $\sigma$ because of the need for an end-to-end odd function in actions. Enforcing this requires symmetric activation functions around their inputs, $\sigma(x) \in [-c, c]$ for $c \in R^+$, including hyperbolic tangent, sinusoidal, or otherwise modified activation functions to make similar constraints.

Finally, an additional constraint of soft-reversibility is that $f$ is Lipschitz. As previously mentioned, we can use the algorithm of \citet{gouk2020regularisation} to bound a network's upper bound on the Lipschitz constant. As Tensor layers are 3D, a few modifications are required to correctly specify the Lipschitz upper bound. The appendix contains a procedure to control the Lipschitz bound of tensor layers.

\begin{figure}
    \centering
    \includegraphics[width=0.4\textwidth]{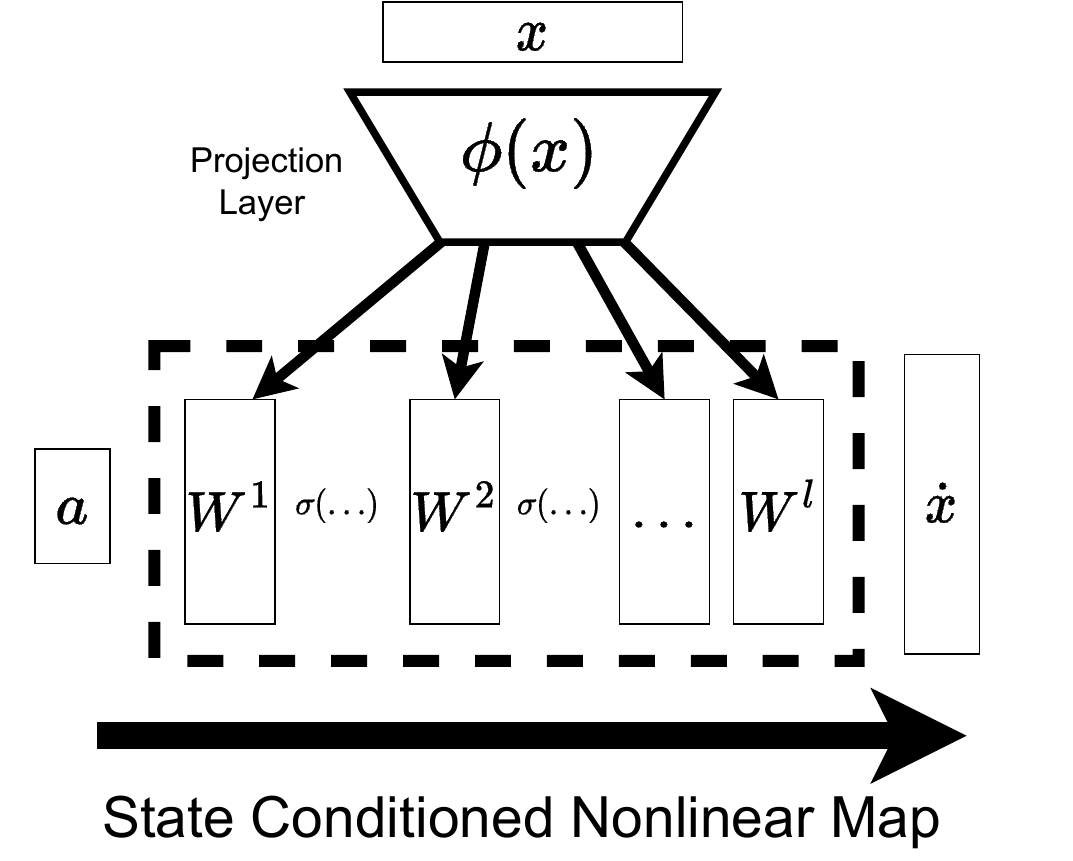}
    \caption{Diagram of neural architecture for action maps.}
    \label{fig:tensormaps}
\end{figure}

\subsection{Limitations of Nonlinearity}
Although we propose two means of imbuing soft reversibility into end-to-end nonlinear neural networks, a pertinent question is whether nonlinear action maps have any benefit from a user perspective. Specifically, we hypothesize that \textit{there will be no statistically significant difference between our proposed nonlinear action map methods and local linear models}. One reason for this hypothesis is that the hyper-linear decoders are already soft reversible, giving users significant control over the system. Our second motivation stems from the first-order Taylor approximation around $a_0 = 0$:
\begin{align*}
    f(x,a) &\approx f(x, a_0) + J^{a}(s, a_0)(a - a_0) \\
           &= f(x, 0) + J^{a}(x, 0)a 
\end{align*}
With our proposed modifications, the Taylor series approximation with both proposed methods means $f(x, 0) = 0$ or $f(x, 0) \approx 0$, reducing the model to being dependent solely on the Jacobian centered at taking no action, or in other words: a local linear model! We evaluate this consideration in our experiments. 

\section{Experiments} \label{sec:expectiments}

\begin{figure}
    \centering
    \includegraphics[width=0.45\textwidth]{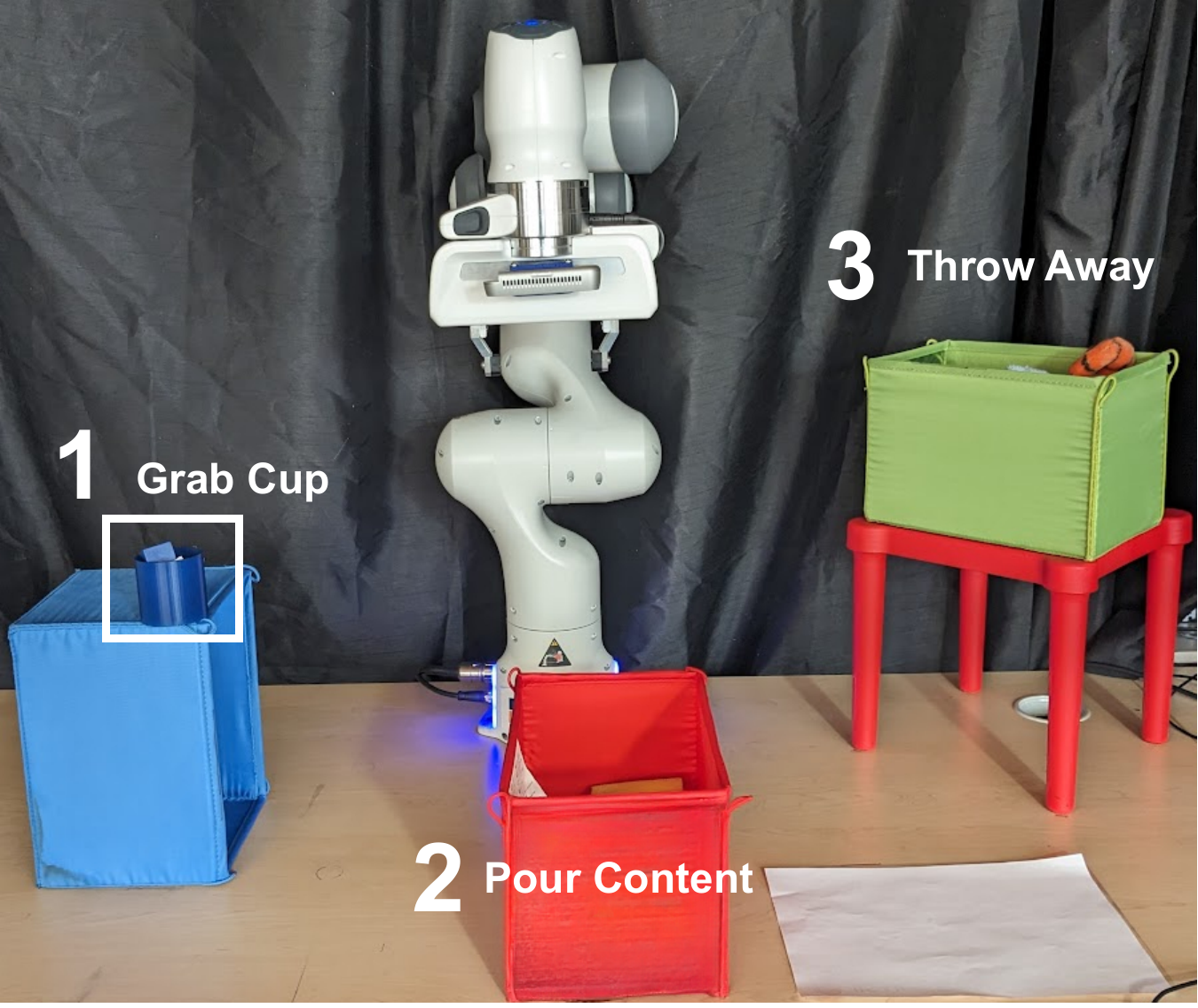}
    \caption{User study environment for data-driven teleoperation experiments. Users had to pick up the cup on the blue box, dispense it's contents into the red container and finally dispense the cup in the green box on the opposite end of the testing environment. }
    \label{fig:userstudysetup}
\end{figure}

\begin{table*}[htp]
\centering
{\normalsize
\begin{tabular}{@{}ccccccc@{}}
\toprule
\multicolumn{1}{l}{} & Bottle to Shelf             & Open Box                    & Plates Standing to Lying    & Pouring                     & Scoop                       \\ \midrule
SCL                  & $\mathbf{-5.727 \pm 0.010}$ & $-5.981 \pm 0.020$          & $-5.451 \pm 0.010$          & $-5.395 \pm 0.005$          & $-5.832 \pm 0.015$          \\
AE                   & $-5.546 \pm 0.007$          & $-5.761 \pm 0.002$          & $-5.458 \pm 0.002$          & $-5.303 \pm 0.027$          & $-5.702 \pm 0.005$          \\
AE (Loss)            & $-5.420 \pm 0.171$          & $-5.264 \pm 0.219$          & $-5.312 \pm 0.132$          & $-5.149 \pm 0.107$          & $-5.263 \pm 0.148$          \\
Tensor               & $-5.609 \pm 0.098$          & $-5.996 \pm 0.012$          & $\mathbf{-5.500 \pm 0.089}$ & $\mathbf{-5.480 \pm 0.010}$ & $-5.717 \pm 0.117$          \\
Tensor (Loss)        & $-5.720 \pm 0.006$          & $\mathbf{-5.998 \pm 0.024}$ & $-5.436 \pm 0.011$          & $-5.421 \pm 0.009$          & $\mathbf{-5.843 \pm 0.005}$ \\ \bottomrule
\end{tabular}
} 
\caption{ Log Base 10 Mean Square Test Error, reproducing actions with different state conditioned action maps. Lower values are best performance. We find the introduced regularizer terms in the user study tend to worsen reconstruction for models but the trained models are symmmetrical in the actions. } \label{tab:actionreconstruction}
\end{table*}

In this section, we report experiment results investigating the value of the proposed nonlinear action mapping approaches. We compare against previously proposed state-conditioned linear (SCL) maps as a baseline \cite{przystupa2023SCL}. SCL is a local linear map that transforms the matrix predictions with the gram-schmidt process at deployment. In this section, we refer \textit{AE} and \textit{SCN}, respectively for the MLP or tensor layer models and use $\textit{Loss}$ when discussing models trained with the proposed regularizers.

Our first interest is evaluating the quality of each model with empirical metrics in simulation. Specifically, we measure each model's prediction accuracy and the effect of the local linear approximation away from $a = 0$. We then conduct simulated teleoperation experiments to investigate how well the models can track trajectories. We close with a user study to evaluate our hypothesis on non-linearity's benefit with able-body participants, testing whether statistical significance exists in subjective and objective metrics. 



 In all experiments, we use the same hyperparameters for training each system. Each model had three linear layers with tanh activations. SCN models had 32 units in the tensor layers and 48 in the state projection layer, whereas SCL and AE models had 256 neurons per layer.  We select these layer parameters to keep the total parameters similar between models, but we note that SCN models had fewer parameters (approximately 65,000 vs 70,000 for other models). We train each model for either 1000 epochs for regression and simulated teleoperation experiments and 500 epochs for the user study experiments. We trained all models with mini-batches of 256 with the Adam optimizer with a learning rate 0.001. 
 
\subsection{Action Map Analysis}
This subsection reports results analyzing the quality of our proposed models. The focus is on how well the proposed action mapping algorithms perform in simulation under ideal scenarios. We consider these unrealistic because we can predict optimal actions under the model via the trained encoder or apply optimization techniques that may not accurately reflect human behaviors (e.g., simulating several trajectories under the action map).

\subsubsection{Model Regression Quality }

Our first interest is analyzing the predictive quality of our model. We reconstruct publicly available demonstration trajectories from the work of \citet{sayantan2023contlearndemo}.  Each data set consists of 10 trajectories with 1000 samples each (10000 interaction tuples). We treated the finite differences between every 3rd sample as the actions because we found finite differences easier to control in our preliminary user study experiments. For each demonstration in the data set, we use 90\% of interaction tuples for training, 5\% for validation, and 5\% for testing. We report test mean square error in Table~\ref{tab:actionreconstruction}. Our results suggest that SCN models are generally better at reconstructing actions, whereas AE models are less accurate and seem negatively impacted by our regularizer loss functions. 

We further compare the effect of the linearization of each model on the output prediction. We compare the linear approximation against the model prediction on the test data set $||f(x, a) - f(x, 0) - \nabla_a f(x, 0)a||^2_2$. We increase the magnitude of $||a||$ at each testing state to see how the approximation varies in Figure~\ref{fig:linearizationRegressRes}, which suggests our models are close to their linearization in action norm ranges considered for our user study, where \textit{Study} was on the user study dataset. Note that the regularizer seems to make SCN and AE models more nonlinear further from the approximation.
\begin{figure}[h] 
        \centering
        \includegraphics[trim=0mm 0mm 0mm 0mm, clip=true, width=0.45\textwidth]{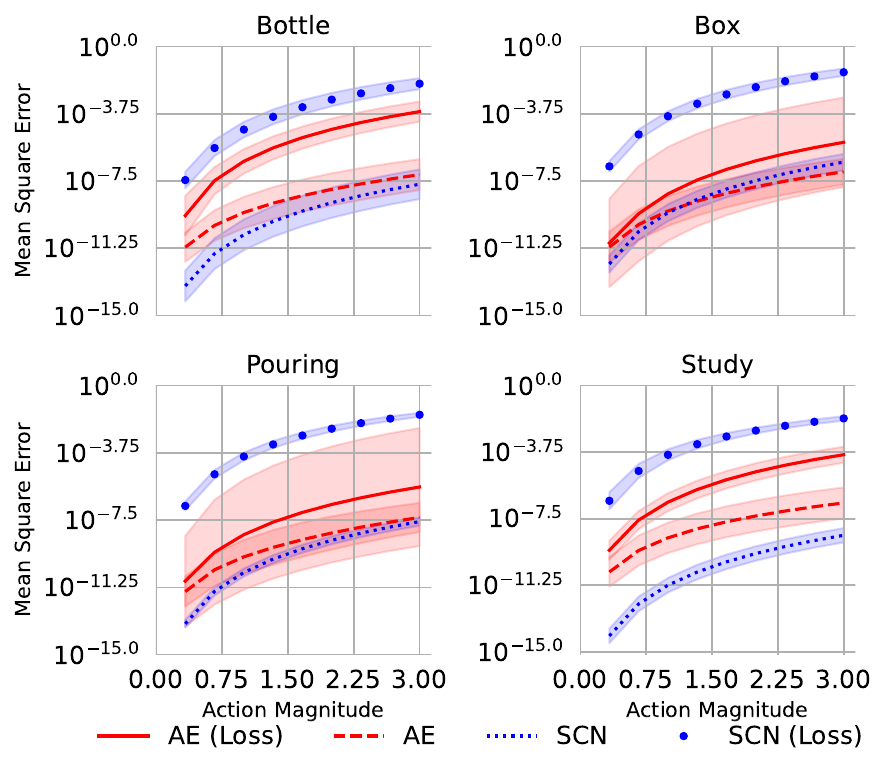}
        \caption{Difference between prediction and linearization as action as magnitude increases.}
        \label{fig:linearizationRegressRes}
\end{figure}

\subsubsection{Simulated Teleoperation}

In our last set of simulation experiments, we run an artificial teleoperation experiment to use the action map to reach a specified target $x^\star$ in the test data sets from \citet{sayantan2023contlearndemo}. The actions were generated greedily with the following approach:
\begin{equation}
    a_t = \min_{a \sim N(0, I)} ||x^{\star} - x_t - \nu f(x_t, a_t)||^2_2
\end{equation}
We found this could be unstable in practice due to local minima, and we mitigate this effect with via-points in a test trajectory to improve the accuracy of simulated teleoperation. Mean performances are in Figure~\ref{fig:linearizationcomparison} across different demonstration data sets with their optimal action norm size. Figure~\ref{fig:linearizationcomparison} also shows the effect of action norms size for both AE (Loss) and SCN in terms of getting closer to the target states.

\begingroup
\begin{figure}[ht] 
\vspace{0.5in}
\hspace{0.00in}
    \begin{minipage}[htp]{0.45\textwidth}
    \centering
        \includegraphics[width=0.75\textwidth,trim=0mm 15 0 0, clip]{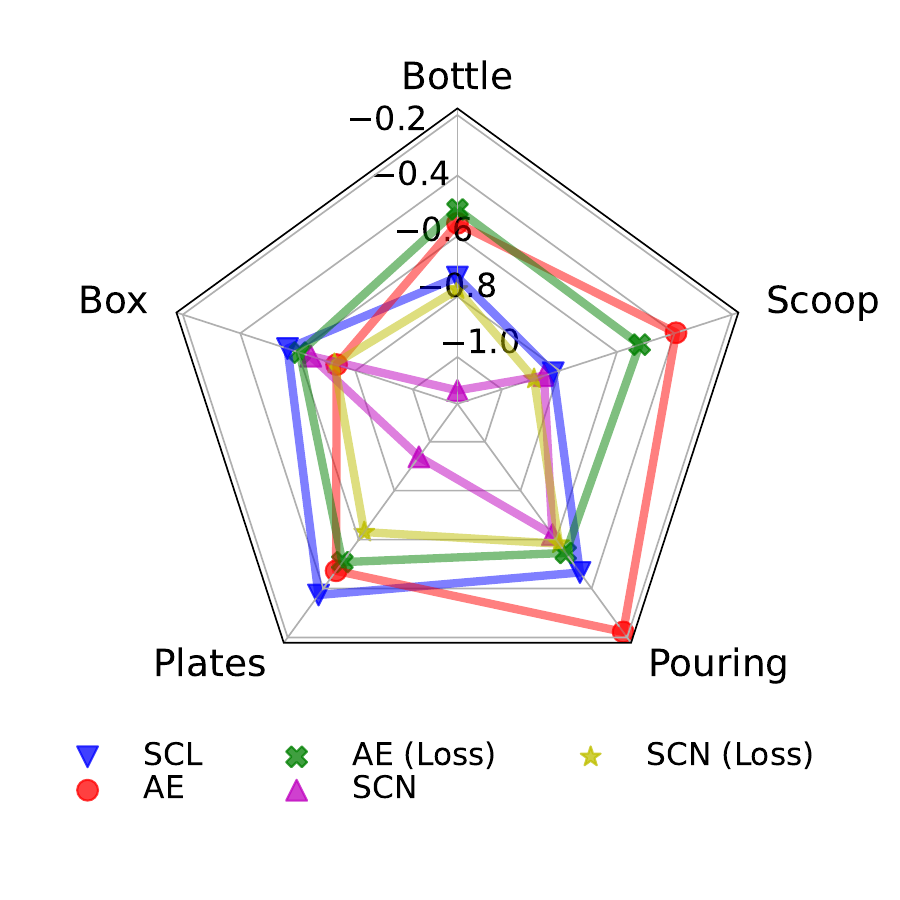}
        \caption*{Model Comparisons}
    \end{minipage}
    \begin{minipage}[htp]{0.45\textwidth}
    \centering
        \includegraphics[width=0.75\textwidth,trim=0mm 15 0 0, clip]{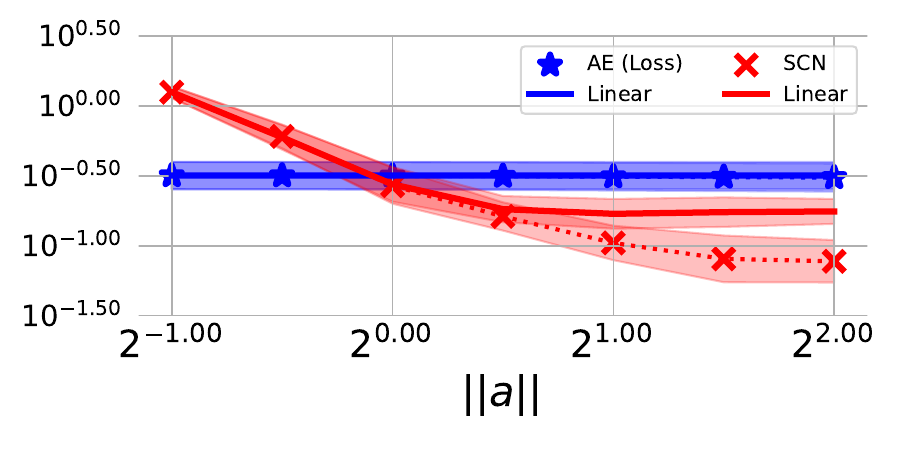}
        \caption*{Bottle To Shelf Demonstrations }
    \end{minipage}
\caption{Simulated Teleoperation Experiments comparing against linearized models. We show best simulation results with tuned magnitude and affect of action norm on reaching targets with AE and SCN models. }
\label{fig:linearizationcomparison}
\end{figure}
\endgroup

\subsection{User Study Results}

\begin{figure}[htp] 
    \centering
        \includegraphics[width=0.4\textwidth,trim=0 5 0 0, clip]{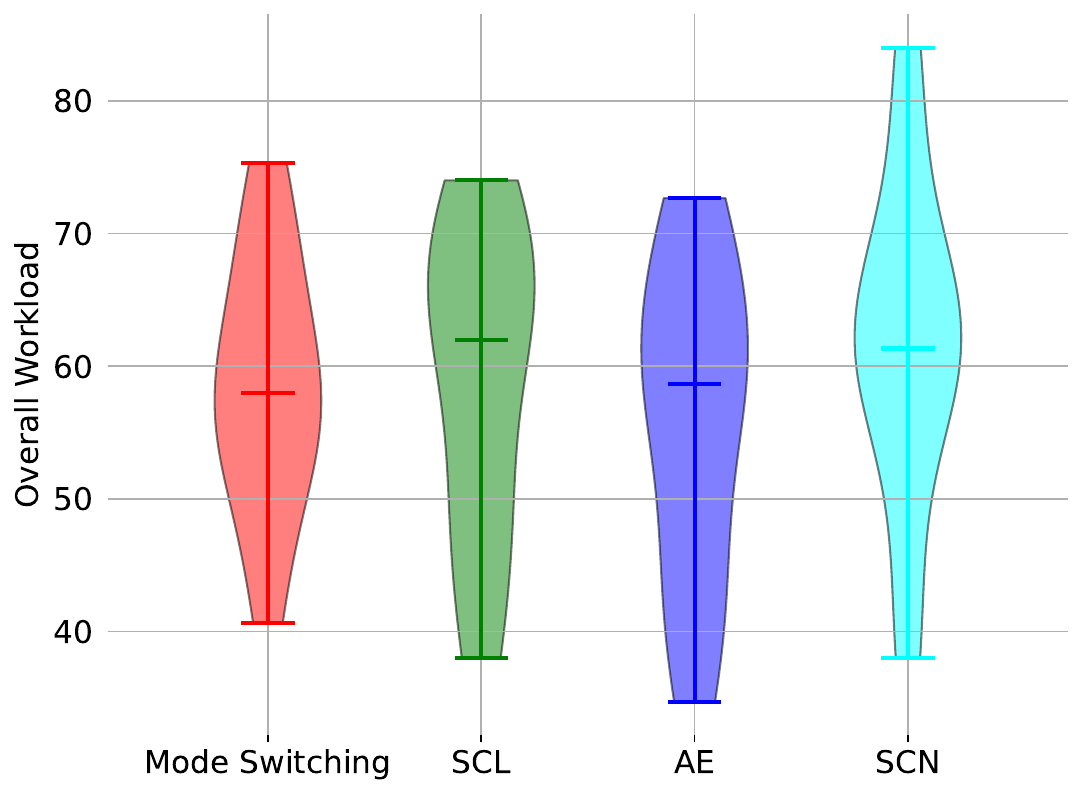}
        \caption{Nasa Workload Index between systems during user study. Between all system used by participants we did not find any statistical significants with p-value 0.05 by the Posthoc Dunn mean test. }
        \label{fig:nasaworkload}
\end{figure}

\begin{figure}[htp] 
        \centering
        \includegraphics[trim=0mm 0mm 5mm 5mm, clip=true, width=0.5\textwidth]{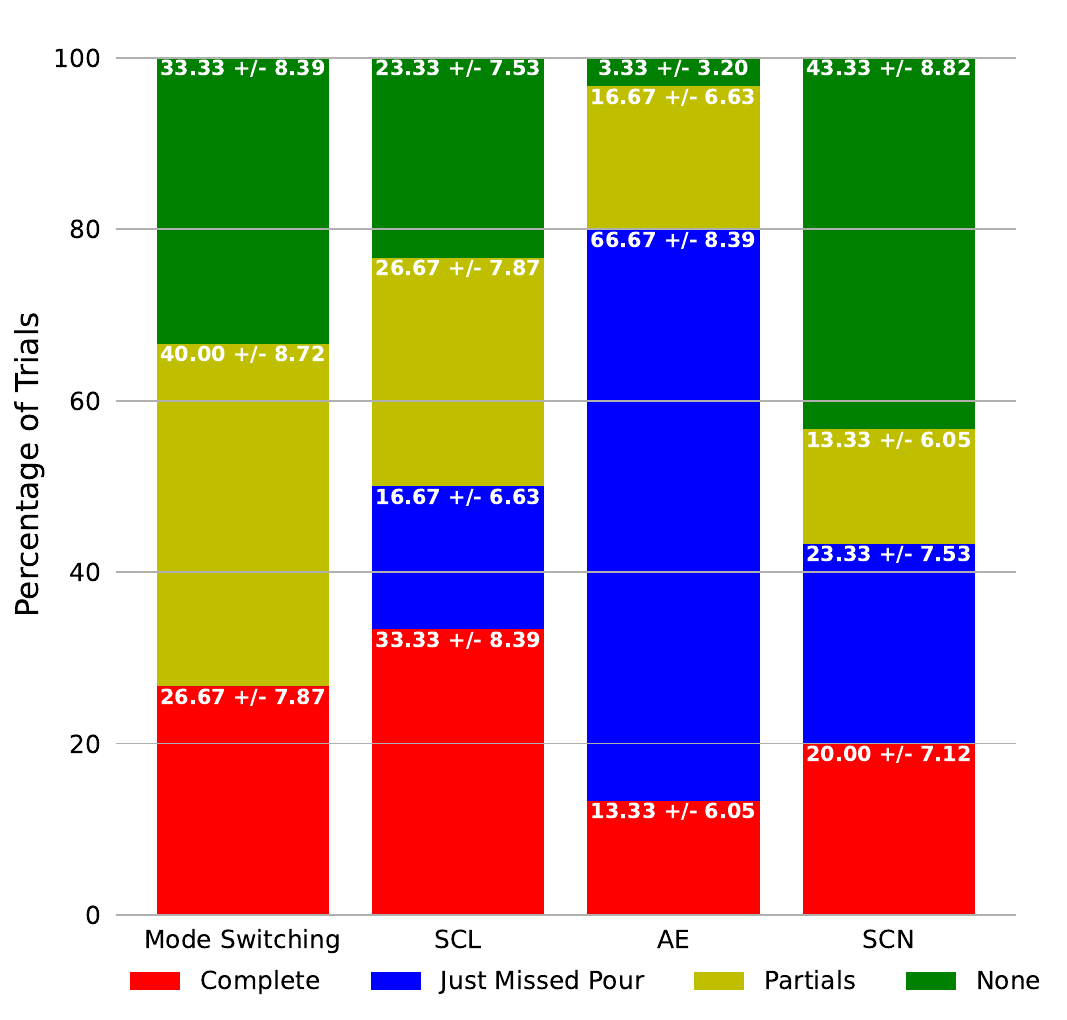}
        \caption{Aggregate success rates for different classifications of the user study. A full completion were when participants successfully picked up the cup, poured contents into the container and threw cup away.}
        \label{fig:completion Rates}
\end{figure}

To validate our hypothesis on the impact of model structure, we perform a user study to complete a sequential multi-stage task. Figure~\ref{fig:userstudysetup} shows the experimental set-up. We asked participants to pick up a cup from the top of a box (blue), empty its contents in the middle container (red), and throw the cup away in the corner on the opposite side of the testing space (green). This task transitioned between multiple aspects of the Cartesian pose for each sub-task. For each model considered, participants completed four trials, where the first two were two minutes (training trials), and the latter two were three minutes (testing trials). We had 15 participants (13 male, two female) ages 20 - 55.

Each participant first controlled the robot with a mode-switching mapping in the experiments. Our mode-switching system had six modes: the first three corresponded to translation control, and the last three corresponded to orientation. We chose to have mode switching first because our main interest was comparing the data-driven interfaces. These trials helped familiarize users with the joy-stick interface and psychologically establish baseline expectations across users. 

After the mode-switching trials, each participant completed the experiments with one of the three data-driven mappings in a randomized single-blind set-up. We included an option to reset the robot to the initial pose. We did not impose any restrictions on when participants reset the robotic system. One strategy would be participants resetting the robot between each subtask instead of performing the movement end-to-end because of the testing environment set-up. This strategy was not necessarily optimal because no demonstrations were collected to perform the second and third sub-tasks from the start pose.

We collected several subjective metrics, including the Nasa-TLX score and a subjective experience, on a 7-point Likert scale. We include the Likert scale questions in the appendix. Figure~\ref{fig:nasaworkload} shows violin-plots of participants Nasa-TLX scores for each model. We found no statistical significance between the four models by the Dunn median statistical test for a p-value of 0.05. We include similar plots for the Likert scale results in the appendix, which only showed statistical significance on two pair-wise relations.

We furthermore measure success rates of the entire task and sub-tasks. Complete success with any system required picking up the cup, throwing contents into the red bin, and successfully throwing the cup away in the green bin. We show the percentage of trials for reported scenarios in Figure~\ref{fig:completion Rates}. We break down results into several scenarios based on our findings. The significant categories included full success or just missing pouring, which was the usual failure case. We counted partial successes where participants did some portions of the task successfully and otherwise reported failures as no subtask completed.

\subsubsection{Discussion}

Our user study results suggest that compared to the previously proposed SCL method and mode switching, users had similar experiences with both SCN and AE approaches. We found few statistical differences in our survey results or the Nasa-TLX score, suggesting users had similar experiences across mapping functions.   If we considered missing the pour as tolerable, the AE method could be considered objectively superior to SCN and SCL where users only struggled pouring with the AE model.  

However, in preliminary evaluations, we had to collect three data sets of similar movements and combine them for AE to learn a meaningful mapping. We found this decision to drastically impact its performance compared to other models. In contrast, we found that SCN and SCL were generally able to learn better mappings with any of the three datasets. This is an open challenge of our research as data-driven interfaces are sensitive to data quality and initialization. This randomness could also affect the mapping between training runs, and investigating the effects of this in deployment is a promising future work. 

We noticed a few intriguing observations throughout the experiments. Generally, users were "greedy" with their actions. They would prefer to stick to geometrically simple movements (e.g., moving directly towards the red bin) even if seemingly more complex movements would improve the chance of success. We observed this behavior despite explaining how the movements were generated and showing video footage of these movements. It might be a promising future direction to encourage users to produce movements that follow the demonstration data more closely instead of mapping to user preference \cite{Li2020LearningUM}. We also offered to help participants in the task if requested and found that users never took this option. Abstaining from assistance could be because they needed to remember about it. However, even in the few cases we reminded participants during trials of this option, they did not request assistance.

\section{Collaborative AI and Modeling Humans}
In this section, we briefly explain the connection of our work between collaborative artificial intelligence systems and human modeling and our interest in attending this bridge program. Our claim through the paper is that nonlinear model complexity provides limited performance gains in an AI teleoperation system. We derive our hypothesis from general mathematical results (i.e., the Taylor approximation) and show with human subjects that local linear action mapping is generally sufficient. We are attempting to distinguish the applications of mathematics in AI and their applications to human subjects, this venue's focal point of discussion.

However, a significant limitation of our work is the need for more expertise in modeling human behaviors. We motivated our work based on previous research in  data teleoperation or our own experience, which otherwise have limited discussion on human modeling literature. By attending the Collaborative AI and Modeling of Humans bridge program, we hope to gain insights from experts in human modeling to motivate future work. An interesting potential research direction is to better simulate human behaviors for teleoperation. A challenge faced in data-driven teleoperation is weak models of humans in simulation, which need to be better justified in the literature. 

As various hyperparameters can affect the performance of a data-driven teleoperation system, an automated way to correlate simulated human behavior to actual performance is imperative. We envision a path forward: apply research results to human modeling that may generate more accurate simulated teleoperation experiments, minimizing the need for extensive human user studies, which can be time-consuming. 
\section{Future work and Conclusion} \label{sec:conclusion}


In this work, we proposed several nonlinear action mapping models to compare against previously proposed local linear models. Based on our simulation results, SCN models work well at regressing actions, and although possible to build soft-reversible nonlinear action maps, more is needed to help the user experience further. Our user study results suggest that users showed no strong preference over one nonlinear action map to another once reverting actions were plausible. Results in simulation, subjective responses, and objective results add evidence to this. 

However, further research on developing the data-driven teleoperation paradigm is needed. From our experience, existing nonlinear architectures are sufficient, and research into other aspects of data-driven teleoperation is necessary. More research into architectures is warranted, but more rigorous standards for data teleoperation mapping comparisons are a crucial next step. Identifying a means to automate correlating the quality of action maps to user experience is a promising future work direction because the cost to conduct user studies is nontrivial for testing every change to a model. Action maps are a promising direction for software solutions to learn teleoperation mapping functions, but future work is needed for these methods to be useable in the real world.


\newpage
\bibliography{aaai24}

\newpage
\section{Appendix}
Our appendix contains additional mathematical and experimental results. These include a few additional details on the proposed state-conditioned nonlinear mappings (SCN), a theoretical section on extending soft-reversibility and some additional details on our user study. The information is helpful to understand our experimental results on comparing architectures for state conditioned action mappings. 
\section{Bounding Tensor Layer Lipschitz}
In this section we show our derivation for the Lipschitz upperbound of tensor layers. These changes were needed due to issues naively running the projection algorithm of \citet{gouk2020regularisation}. Algorithm~\ref{alg:algorithm} shows this function, and changes are in blue text. This function is called after each gradient update on the model parameters. 

One can rewrite a 3D tensor operation as a matrix product between $\bar{H} \in \mathbf{R}^{h \times wn}$ and matrix $A \in \mathbf{R}^{wn \times w}$. Matrix $\bar{H}$ is the reshaping of the 3D tensor along the dimension we multiply by actions, and $A$ is a block diagonal matrix of the input vector for the first tensor product operation. Research in neural adaptive control systems has applied the same operation \cite{dheeraj2019directadaptiveNCD}. Note that for two states $x$ and $x'$ we use $\Delta x = x - x'$ to derive the tensor Lipschitz relationship:
\begin{align*}
         ||\bar{H}Ax - \bar{H}Ax'||_p \leq & L||x - x'||_p &  \implies \\
||\bar{H}A(\Delta x)||_p\leq & L||\Delta x||_p  & \implies\\
 ||\bar{H}||_p ||A(\Delta x)||_p \leq & L||\Delta x||_p & \implies\\  
||\bar{H}||_p \frac{||A(\Delta x)||_p}{||\Delta x||_p} \leq & L & \implies\\
 ||\bar{H}||_p ||A||_p \leq & L . 
\end{align*} 
In the second to last step, we apply the definition of the operator norm on $\|A\|_p$. This derivation shows if we can constraint the matrix norms of the actions, which are user defined, and the weights to be less than the desired $L$ the tensor layer should be Lipschitz continuous. We use $p = 2$ in our experiments, but the infinity or L1 norm is also applicable.

Unfortunately, one must identify each layer input's maximum norm to apply the algorithm of \citet{gouk2020regularisation} on $\bar{H}$. The first layer input norm is the action's norm and is specified by the user. We assume the model should account for the maximal representation with bounded input values for the subsequent hidden layers. We take the sum over the absolute values of 3D tensor H and take the norm of the activations so that each layer input dimension $a_i = \sum_{jk} |H_{ijk}|$. This approach is a heuristic; we leave it as future work to constrain hypernetwork Lipschitz constants properly because it is nontrivial to address

\begin{algorithm}[tb]
\caption{Modified Projection Step of learning algorithm of \citet{gouk2020regularisation}.
Blue text are changes for SCN changes. }\label{alg:goukalg}
\label{alg:algorithm}
\textbf{Inputs}: 
\begin{itemize}
    \item  $\bar{H} \in \Theta$ parameters of model, $H \in R^{h \times wn}$ 
    \item $M = \argmax_{a \in A} ||a||$
    \item $\lambda$: desired layerwise Lipschitz 
    \item $\sigma$: next layers activation function 
\end{itemize}
\textbf{Output}: Spectral Projected Weights
\begin{algorithmic}[1] 
{\color{blue} \STATE A $\gets M$}
\FOR{$\bar{H}^{i} \in \Theta$}
\STATE $L \gets \frac{||\bar{H}||_p}{\lambda}$
\STATE $W \gets \frac{W}{\max(1,{\color{blue}A}L)}$
{\color{blue}
\STATE $H \gets \text{RESHAPE}(\bar{H})$ (3D tensor shape)
\STATE A $\gets \| \sigma(\sum_{jk} |H_{jk}|) \|$ (sum over input dimension)}  
\ENDFOR
\STATE \textbf{return} $\Theta$
\end{algorithmic}
\end{algorithm}

\section{Trajectory Reversibility}

This section provides bounds on trajectory soft reversibility, which accounts for the changing actions over several time steps. Our proof shows that the induced vector field with the inverse action sequence will return to a previously visited state or otherwise bound this distance. Our analysis is similar to the error analysis of the Euler method, and our proofs use standard results on solutions to initial value problems \cite{stuart2009dynamicalsystems}. 

We first show that in a continuous time system, any arbitrary function $f(x(t), a(t))$ can return to the exact initial state $x(0)$. This section defines the robotic state $x(t)$ as the robot's joint configurations. The actions $a(t)$ are inputs from a policy in a bounded symmetrical domain $a \in [-c, c]^{n}$ for some constant $c \in \mathbf{R}^{+}$. The action map follows the dynamics of Equation~\ref{eqn:dynamicalsystem}. $f$ is Lipschitz continuous for constant $L$, and is an odd function in actions (i.e. $f(x(t), -a(t)) = -f(x(t), a(t))$). 

\begin{theorem}[Trajectory Reversibility] \label{thm:trajectory_reversibility}
In the interval $0 \leq t \leq 1$ following $f$ in Equation~\ref{eqn:dynamicalsystem}, and assuming that $a(t+1) = -a(t-1)$,  we have $x(2- t) = x(t)$.
\end{theorem}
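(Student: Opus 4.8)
The plan is to exploit a time-reflection symmetry: I would introduce the reflected trajectory $y(t) = x(2-t)$ on $0 \leq t \leq 1$ and show that it solves \emph{exactly} the same initial value problem as $x$, so that uniqueness of solutions forces $y \equiv x$, which is precisely the claim $x(2-t) = x(t)$. The appeal to standard initial-value-problem theory (e.g. \citet{stuart2009dynamicalsystems}) is available because $f$ is assumed Lipschitz.

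First I would differentiate $y$. By the chain rule and Equation~\ref{eqn:dynamicalsystem}, $\dot{y}(t) = -\dot{x}(2-t) = -f(x(2-t), a(2-t)) = -f(y(t), a(2-t))$. Next I would read the hypothesis $a(t+1) = -a(t-1)$ as antisymmetry of the action signal about the reflection point $t=1$, i.e. $a(2-t) = -a(t)$. Substituting yields $\dot{y}(t) = -f(y(t), -a(t))$, and now the oddness of $f$ in the action argument, $f(y(t), -a(t)) = -f(y(t), a(t))$, cancels the remaining sign to give $\dot{y}(t) = f(y(t), a(t))$. Thus $y$ obeys the very same dynamics as $x$ under the same action signal $a(\cdot)$; the essential mechanism is that the two minus signs — one from the time reflection, one from the action negation routed through oddness — annihilate one another.

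Then I would pin down a shared value at the reflection point: $y(1) = x(2-1) = x(1)$. Since $f$ is Lipschitz in the state with constant $L$ and $a(\cdot)$ is a fixed time signal, $f(\cdot, a(t))$ is (uniformly) Lipschitz in the state, so Picard--Lindel\"of guarantees a unique solution of the IVP through the datum $(1, x(1))$ both forward and backward in time. As $x$ and $y$ both solve this IVP and agree at $t=1$, they coincide on the entire interval, establishing $x(2-t) = x(t)$; in particular $x(2) = x(0)$, so the trajectory returns exactly to its start.

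The main obstacle I anticipate is bookkeeping rather than analysis: correctly extracting the midpoint antisymmetry $a(2-t) = -a(t)$ from the stated condition and tracking the sign flips so that $y$ lands on the identical vector field. Once that algebra is in place, the conclusion follows immediately from uniqueness for Lipschitz ODEs, which is what upgrades ``same dynamics and one shared value'' into ``identical trajectories.''
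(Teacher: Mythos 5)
Your proof is correct and takes essentially the same approach as the paper: the paper defines the difference $d(t) = x(t+1) - x(1-t)$, uses the action antisymmetry plus oddness of $f$ to show $d \equiv 0$ is consistent with the induced ODE, and invokes Picard--Lindel\"of uniqueness, which is exactly your reflection-plus-uniqueness argument up to the reparametrization $t \mapsto 1-t$. Your packaging via $y(t) = x(2-t)$ solving the identical IVP anchored at $y(1)=x(1)$ is in fact a cleaner presentation of the same mechanism, and you also correctly read the theorem's hypothesis as midpoint antisymmetry $a(2-t)=-a(t)$, matching what the paper's proof actually uses.
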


\begin{proof} Suppose $d(t) := x(t+1) - x(1-t)$, such that $d(0) = x(1) - x(1) = 0$. Let  $t_0 = \inf_{t \in [0, 1]} d(t) \neq 0$ be the first instance $d(t)$ deviates from  0, then $t_0 > 0$ because $d(0) = 0$, and that $d(t) = 0$ for all $t \leq t_0$. 
Take the derivative w.r.t.  $t$:
\begin{align*}
    \frac{\partial d(t)}{\partial t} &= \frac{\partial x(t +1)}{\partial t} + \frac{\partial x(t - 1)}{\partial t}\\
                                     &= f(x(t + 1), a(t+1 )) + f(x(1 - t), a(1 - t))\\
                                     &= f(x(1 - t), a(1 - t)) - f(x(t + 1), a(1 - t))
\end{align*}
where $f(x(t+1), a(1 + t)) = -f(x(t + 1), a(1-t))$ because we have $a(t+1) = -a(1 - t)$. This derivative is zero for $t=t_0$, because $d(t_0) = 0$, or that $d(t_0) = 0$ and $\dot d(t_0) = 0$. Moreover, for $d(t) = 0$, we have  $\partial d(t)/\partial t = f(q(t + 1), a(t+1 )) + f(q(1 - t), a(1 - t)) = f(q(t + 1),a(t+1)) - f(q(t + 1),a(t+1)) = 0 = \dot 0$.

By the Picard-Lindelof theorem  \cite{millODE2009} there exists $t >t_0$ such that $d(t) = 0$ is the unique solution of the differential equation for $t >t>t_0$.
\end{proof}

This result reverses the dynamics' gradient flow induced by the user actions' sequence. Even though the movements may change with a state-dependent action map, this result shows that a state-conditioned action map can perfectly return to a state when doing the opposite action sequence! The result further connects the accuracy of reversibility with the sampling rate. The faster we can update the velocities from an action map, the easier it should be to reverse actions.

\subsection{Reversing with Discretization Error }
Unfortunately, discretization errors cause reversed trajectories to deviate from any initial state. We account for discretization by bounding the distances from an initial state over a trajectory. We assume that $||f(s, a)|| \leq M$ where $M \in \mathbf{R}^+$. This is a reasonable assumption because robot joints exist in a bounded space, and a user's max action is manually specified. We derive a 
 necessary lemma to bound distance between subsequent states on the trajectory. 

\begin{lemma}
    For two subsequent points $x_T(k)$ and $x_T(k+1)$ following a function $f$ with Euler's method, we have that $||f(x_T(k), a) - f(x_T(k+1), a)|| \leq ML\nu$ where $M = \max_{x\in X,a \in A}||f(x, a)||$
\end{lemma}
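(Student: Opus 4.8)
The plan is to chain the two structural assumptions already in force---that $f$ is $L$-Lipschitz and that $\|f\|$ is uniformly bounded by $M$---with the single Euler update relating the two consecutive states. First I would invoke Lipschitz continuity of $f$ in its state argument, holding the action $a$ fixed on both sides, to obtain
\[
\|f(x_T(k),a) - f(x_T(k+1),a)\| \le L\,\|x_T(k) - x_T(k+1)\|.
\]
This reduces the claim to bounding the distance travelled in a single Euler step, which is where the dynamics enter.

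Next I would substitute the Euler update of Equation~\ref{eqn:euler}, namely $x_T(k+1) - x_T(k) = \nu\, f(x_T(k), a_T(k))$, so that $\|x_T(k) - x_T(k+1)\| = \nu\,\|f(x_T(k), a_T(k))\|$. Applying the uniform bound $\|f(x,a)\| \le M$ to the term $f(x_T(k), a_T(k))$ gives $\|x_T(k) - x_T(k+1)\| \le \nu M$. Combining this with the Lipschitz inequality from the first step yields $\|f(x_T(k),a) - f(x_T(k+1),a)\| \le L\,\nu M = ML\nu$, as claimed.

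There is no genuine obstacle here; the only points requiring care are bookkeeping. I must ensure the Lipschitz constant is applied with the action argument fixed to the \emph{same} $a$ on both evaluations, so that only the state varies between the two terms; the action $a$ in the statement and the step action $a_T(k)$ need not coincide, and the argument does not require them to. I must also confirm that the bound $M$ legitimately applies to $f(x_T(k), a_T(k))$, which it does, since $M$ is the maximum of $\|f\|$ over all admissible state--action pairs and $(x_T(k), a_T(k))$ is one such pair. Finally, since $\nu, L, M \ge 0$, the chained inequalities preserve their direction. This per-step deviation bound is exactly the ingredient the subsequent discretization-error analysis needs, as summing it over a trajectory produces the accumulated reversal error.
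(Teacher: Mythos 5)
Your proof is correct and follows exactly the same route as the paper's: apply Lipschitz continuity in the state argument, substitute the Euler update $x_T(k+1) - x_T(k) = \nu f(x_T(k), a_T(k))$, and bound the resulting velocity term by $M$. Your added remark that the fixed action $a$ and the step action $a_T(k)$ need not coincide is a careful bookkeeping point the paper glosses over, but it does not change the argument.
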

\begin{proof}
    \begin{align*}
        ||f(x_T(k), a) - f(x_T(k+1, a)|| &\leq L||x_T(k) - x_T(k+1)||  \\
        & = L||\nu f(x(k), a)||  \\
        & \leq M L \nu
    \end{align*}
\end{proof}

This lemma states the maximal distance a user's action can take the robot's state at any point during control. Naturally, between any two points, the maximal distance cannot exceed the maximum velocity command times the sampling rate. We use this result to now state the bound on soft reversibility over trajectories: 
\begin{theorem}[Trajectory Soft Reversibility]
    Suppose there is an odd and bounded function $f(x(t), a(t))$ that is Lipschitz continuous $||f(x(t), a(t)) - f(x'(t), a(t))|| \leq L ||x(t) - x'(t)||$. For points $x_T(k) = x(0) + \sum_{i=0}^{k-1}f(x_T(i), a_T(i))$ such that $ a_T(T+k) = -a_T(T -k)$ for a fixed duration $T \in \mathbf{N}$, we have 
    \begin{equation}
        ||x_T(0) - x_T(2T)|| \leq \nu M [\exp^{TL\nu} - 1]
    \end{equation}
\end{theorem}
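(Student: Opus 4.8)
The plan is to mimic the continuous-time argument behind Theorem~\ref{thm:trajectory_reversibility} at the level of the discrete Euler iterates of Equation~\ref{eqn:euler}, and to track how the exact symmetry $x(2-t)=x(t)$ degrades under discretization. The natural object is the symmetric deviation $e_j := x_T(T+j) - x_T(T-j)$, which satisfies $e_0 = 0$ (the forward and reversed branches coincide at the midpoint $k=T$) and $e_T = x_T(2T) - x_T(0)$, so that $\|e_T\|$ is exactly the quantity to be bounded. First I would derive a one-step recursion by subtracting the two Euler updates: since $x_T(T+j+1)-x_T(T+j) = \nu f(x_T(T+j), a_T(T+j))$ and $x_T(T-j-1)-x_T(T-j) = -\nu f(x_T(T-j-1), a_T(T-j-1))$, I get $e_{j+1} - e_j = \nu f(x_T(T+j), a_T(T+j)) + \nu f(x_T(T-j-1), a_T(T-j-1))$, where the plus sign reflects that the forward branch is traversed backwards.

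The key algebraic step is to collapse these two velocity terms into a single state difference evaluated at a common action. Using the reversal hypothesis to align the actions (so that the reversed edge undoes the matching forward edge) together with oddness of $f$, the first velocity flips sign and shares the action $a := a_T(T-j-1)$ with the second, giving $e_{j+1}-e_j = -\nu\big(f(x_T(T+j), a) - f(x_T(T-j-1), a)\big)$. I would then insert the intermediate term $f(x_T(T-j), a)$ and split: the part comparing $x_T(T+j)$ with $x_T(T-j)$ is Lipschitz in the state and contributes $L\|e_j\|$, while the part comparing the consecutive forward states $x_T(T-j)$ and $x_T(T-j-1)$ is precisely the quantity controlled by the preceding Lemma, hence bounded by $ML\nu$. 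This yields the per-step estimate $\|e_{j+1}\| \leq (1 + L\nu)\|e_j\| + ML\nu^2$.

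It then remains to solve this discrete Gronwall-type recursion. Since $e_0 = 0$, unrolling and summing the geometric series gives $\|e_T\| \leq ML\nu^2 \sum_{i=0}^{T-1}(1+L\nu)^i = M\nu\big[(1+L\nu)^T - 1\big]$, and the elementary bound $(1+L\nu)^T \leq \exp(TL\nu)$ delivers the claimed inequality $\|x_T(0) - x_T(2T)\| \leq \nu M[\exp^{TL\nu} - 1]$. Conceptually this is the discrete analogue of the error analysis of Euler's method: the $ML\nu^2$ local term is the one-step truncation error supplied by the Lemma, and the factor $\exp(TL\nu)$ is the Lipschitz amplification accumulated over the $T$ reversed steps.

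The step I expect to be the main obstacle is the index bookkeeping in the second paragraph: making the two velocity terms share exactly the same action argument so that oddness and the Lipschitz/Lemma bounds apply cleanly. Reversing a discrete sequence introduces an intrinsic off-by-one between the reversed edge at index $T+j$ and the forward edge it must undo, so I would first reconcile the stated condition $a_T(T+k) = -a_T(T-k)$ with the pairing that actually aligns consecutive edges. Without the correct alignment one is left with a residual action-mismatch term of the form $f(x_T(T-j-1), a_T(T-j-1)) - f(x_T(T-j-1), a_T(T-j))$, which the state-only Lemma does not control and which would require an extra assumption of Lipschitz continuity in the action argument; once the actions are aligned, the remaining estimates are routine.
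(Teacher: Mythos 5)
Your proof is correct and takes essentially the same route as the paper's: the paper works with the same symmetric deviation $E_t = \|x_T(T+t) - x_T(T-t)\|$ (your $\|e_j\|$), derives the identical recursion $E_t \leq (1+L\nu)E_{t-1} + ML\nu^2$ via oddness, state-Lipschitzness, and the preceding lemma, and unrolls the same geometric series with $(1+L\nu)^T \leq \exp(TL\nu)$, the only cosmetic difference being that you build outward from the midpoint while the paper peels steps off the endpoints. The off-by-one issue you flag in your last paragraph is genuine rather than routine bookkeeping: taken literally, the hypothesis $a_T(T+k) = -a_T(T-k)$ forces $a_T(T) = 0$ and leaves every reverse edge undoing the wrong forward edge (already at $T=1$ the first forward step is never undone and the bound fails), and the paper's own proof quietly conflates $a_T(0)$ with $a_T(1)$ in its first inequality to avoid exactly the uncontrolled action-mismatch term you describe; your edge-aligned pairing $a_T(T+k) = -a_T(T-k-1)$ is the correct reading and makes the argument sound.
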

\begin{proof}
    Our proof is similar to error bound in Euler's method but exploits knowledge about the true function. Let $a_T(T + k) = -a_T(T- k)$, $E_t = ||x_T(T + t) - x_T(T - t)||$ and $\Delta f_t = || f(x_T(t + 1), a(t)) -  f(x_T(t), a(t))||$.
    \begin{align*}
        E_T &= ||x_T(0) - x_T(2T)|| \\
       & \leq ||x_T(1) - \nu f(x_T(1), a(1)) - x_T(2T-1) \\
       & \quad + \nu f(x_T(2T-1), a_T(1))|| + \nu \Delta f_0 \\
        &\leq E_{T-1} + \nu||f(x_T(2T - 1), a_T(1)) -  f(x_T(1), a_T(1))|| \\
        & + \nu \Delta f_{0}  \\
        & \leq E_{T-1} + L\nu||x_T(2T -1) - x_T(1)|| + \nu \Delta f_{0}  \\
        & = E_{T-1}(1 + L\nu) + \nu \Delta f_{0}   \\
        &\leq E_{T-1}(1 + L\nu) + ML\nu^2
    \end{align*}
    Where we use the previous lemma to bound subsequent steps. If we define $c = 1 + L \nu$ and $G = MLv^2$, then recursively applying the definition of $E_t$ leads to the following:
    \begin{align*}
        c E_{T-1} + G & \leq c (c E_{T- 2} + G) + G \\
                      & = c^2 E_{T-2} + (cG + G) \\
                      & \leq c^2 [c E_{T-3} + G] + (cG + G) \\
                      & = c^3 E_{T-3} + c^2G + cG + G  \\ 
                      & \leq ... \\
         E_T          &\leq (c)^{T-1} E_0 + G \sum_{i=0}^{T -1} c^i, 
    \end{align*}
    and replacing definitions of $c$ and $G$:
    \begin{equation*}
        E_T \leq (1+L \nu)^{T-1} E_0 + ML\nu^2 \sum_{i=0}^{T -1} (1 + L \nu)^i.
    \end{equation*}
 Note that $E_0 = 0$ because the optimal inverse solution and approximation start at same $x_T(T)$ position. Applying the geometric series on the second term we end up with:
\begin{align*}
         E_T &\leq ML\nu^2 (\frac{(1 + L\nu)^{T} - 1}{L \nu}) \\
            & = M \nu [ (1 + L\nu)^{T} - 1] \\
            & \leq \nu M [\exp^{TL\nu} - 1]
\end{align*}
\end{proof}
This proof shows that as the step size decreases (i.e., that we have a higher sampling rate), we expect to see the error rate decrease to zero.  

The previous proof assumes that the learned function $f$ is exactly reversible. We found experimentally that our loss function approach does not guarantee this is true. When probing the models, we would find the cosine similarity to usually be -1 even if there was a nonzero mean square error between the true $-f(x,a)$ and $f(x, -a)$. This observation suggests that the model predicts that predictions $f(x, a) $ and $f(x, -a)$ point in opposite directions, but the magnitudes vary slightly. To quantify the consequences of residual error, we have the following corollary. 

\begin{corollary}[Residual Inversing Actions]
    Suppose that $f(x, a)$ is such that $f(x, -a) = -f(x, a) + \epsilon$ is some bounded residual error $||\epsilon|| \leq E$. Then 
    \begin{equation}
        ||x_T(0) - x_T(2T)|| \leq (\nu M + \frac{E}{L}) [\exp^{TL\nu} - 1]
    \end{equation}
\end{corollary}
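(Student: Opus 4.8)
The plan is to retrace the proof of the Trajectory Soft Reversibility theorem almost verbatim, treating the residual $\epsilon$ as an extra additive perturbation that enters exactly once per recursion step. First I would keep the same quantities $E_t = \|x_T(T+t) - x_T(T-t)\|$ and $\Delta f_t$, and peel one Euler step off each end of $E_T$, writing $x_T(0) = x_T(1) - \nu f(x_T(0), a_T(0))$ and $x_T(2T) = x_T(2T-1) + \nu f(x_T(2T-1), a_T(2T-1))$. The action condition $a_T(T+k) = -a_T(T-k)$ gives $a_T(2T-1) = -a_T(1)$, so the reversed step contains the term $f(x_T(2T-1), -a_T(1))$, which is precisely where the oddness assumption was used in the original argument.

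The single modification is at that substitution. Where the previous proof replaced $f(x,-a)$ by $-f(x,a)$ exactly, here I would instead use the hypothesis $f(x,-a) = -f(x,a) + \epsilon$, so the identical manipulation goes through up to an additional contribution $\nu\epsilon$. Applying the triangle inequality, the Lipschitz bound, and the subsequent-step lemma exactly as before, and using $\|\epsilon\| \le E$, the per-step recursion becomes
\begin{equation*}
E_T \leq E_{T-1}(1 + L\nu) + ML\nu^2 + \nu E,
\end{equation*}
i.e.\ the additive constant $G = ML\nu^2$ of the odd case is augmented to $G' = ML\nu^2 + \nu E$, while the geometric factor $c = 1 + L\nu$ is unchanged.

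Finally I would solve the recursion as in the theorem. With $E_0 = 0$, unrolling gives $E_T \leq G' \sum_{i=0}^{T-1} c^i = G' \frac{(1+L\nu)^T - 1}{L\nu}$, and substituting $G' = ML\nu^2 + \nu E$ collapses the coefficient to $M\nu + E/L$. Bounding $(1+L\nu)^T \le \exp^{TL\nu}$ via $1 + L\nu \le e^{L\nu}$ then yields $\|x_T(0) - x_T(2T)\| \leq (\nu M + E/L)[\exp^{TL\nu} - 1]$, which recovers the original statement when $E = 0$.

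I expect the only real obstacle to be bookkeeping: ensuring the residual enters additively, once per step and outside the Lipschitz-amplified term, rather than being folded into the $(1+L\nu)$ factor. The geometric sum already propagates each step's $\nu E$ through the remaining $(1+L\nu)^i$ expansions, so no new analytic idea is needed beyond the Euler-type error analysis already established.
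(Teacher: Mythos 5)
Your proposal is correct and follows essentially the same route as the paper's own proof: the paper likewise reuses the recursion from the Trajectory Soft Reversibility theorem, absorbs the residual by augmenting the per-step term to $\nu\Delta f_t \leq ML\nu^2 + \nu E$ while keeping the factor $(1+L\nu)$ unchanged, and then telescopes and applies the geometric sum and $1+L\nu \leq e^{L\nu}$ to obtain $(\nu M + E/L)[\exp^{TL\nu}-1]$. Your write-up is in fact slightly more explicit than the paper about where the residual enters (at the oddness substitution in the peeled-off Euler step), but the argument is the same.
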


\begin{proof}
    The proof follows the same steps as the previous proof, but we have that:
    $\Delta f_t = ||f(X_T(t+ 1), a(t) - f(x_T(t), a(t)|| + ||\epsilon(t)||$.  where $\epsilon(t)$ is the error for taking $-a(t)$.  
    In the previous proof then would use the following relationship:
    \begin{equation*}
        \nu \Delta f_t \leq ML\nu^2 + \nu E 
    \end{equation*}
    
    Applying the above bound after telescoping we would end up with the following:
    \begin{align*}
         E_T &\leq (ML\nu^2 + E\nu) (\frac{(1 + L\nu)^{T} - 1}{L \nu}) \\
            & = (M \nu + \frac{E}{L}) [ (1 + L\nu)^{T} - 1] \\
            & \leq (M \nu + \frac{E}{L}) [[\exp^{TL\nu} - 1]
    \end{align*}
\end{proof}

At least in our analysis, this provides insight that a typical end-to-end encoder is inappropriate for teleoperation settings. Reducing the residual requires increasing model capacity and data coverage of the task interactions. This issue is mitigated directly by designing a neural network to encode reversibility in its structure, requiring no further parameter tuning to minimize residual errors.

\subsubsection{Soft Reversibility Experiments} 
We perform experiments to validate our theoretical results on soft reversibility. The hyperparameters for these models are otherwise the same as in the rest of our experiments, except that the tensor layers do not use matrix bias terms as described in the methods section. We generated random actions of unit length, which we executed for increasing durations with a fixed step size $\nu = 0.001$. We ran the reverse trajectory for the same duration as the forward trajectory and measured the gap between the final and initial start states. The models were run on a planar 5-DOF reaching task where trajectories each point a line in the X-Y plane. 

 Although we can enforce an upper bound on the Lipschitz constant, finding the actual value is an NP-hard problem \cite{gouk2020regularisation}. We thus report the specified hyperparameter upper bound used during training, the observed upper bound from the model, and the estimated maximal L2 norm of the input Jacobian. We estimate the Jacobian norm via gradient descent along the state space. Similarly, identifying the actual maximal value of the action map is difficult in practice as there is no analytic solution. We again turn to gradient descent to generate an approximation of the model's larger velocity magnitude. We applied the same procedure to estimate the maximal residual error of the AE-trained models.
 
Figure~\ref{fig:softrev-exp} shows results across different models. For each duration and sampling rate, we ran 20 trajectories and averaged the results, reporting the standard error. The bounds generally leave a large margin between the observed behavior and predicted performance. For \textit{SCN}, we noticed approximately a factor $\nu$ is between the actual and bound errors. In the AE case, one may notice that the error for $\lambda = 0.0156$ is looser than $\lambda = 0.1250$. This bound of the discrepancy is a product of a more significant residual $||E||$ and smaller Lipschitz constants, magnifying potential error effects. The model cannot accurately estimate the data and does not learn to reverse actions. 

Furthermore, the astute reader may notice that the estimated Jacobian norm of SCN violates the estimated upper bound of the model. During training, we used a truncated value of the actual action norm $||a|| = \sqrt{2}$, which could influence the estimation slightly. Interestingly, our upper bound estimates reflect the observed experimental results. The fact that the global maximal Jacobian norm disagrees with our Lipschitz estimation may mean that our model meets the upper bound locally. This suggests that additional theoretical results might be possible with looser assumptions. Although this requires further investigation, we consider this future work. Neural network Lipschitz constants are a nontrivial topic of discussion, and meaningful experiments would go beyond the teleoperation setting into other machine learning areas.

\begingroup
\begin{figure}[htp] 

    \begin{minipage}[t]{0.45\textwidth}
    \centering
        \includegraphics[trim=0mm 0mm 0mm 0mm, clip=true, width=\textwidth]{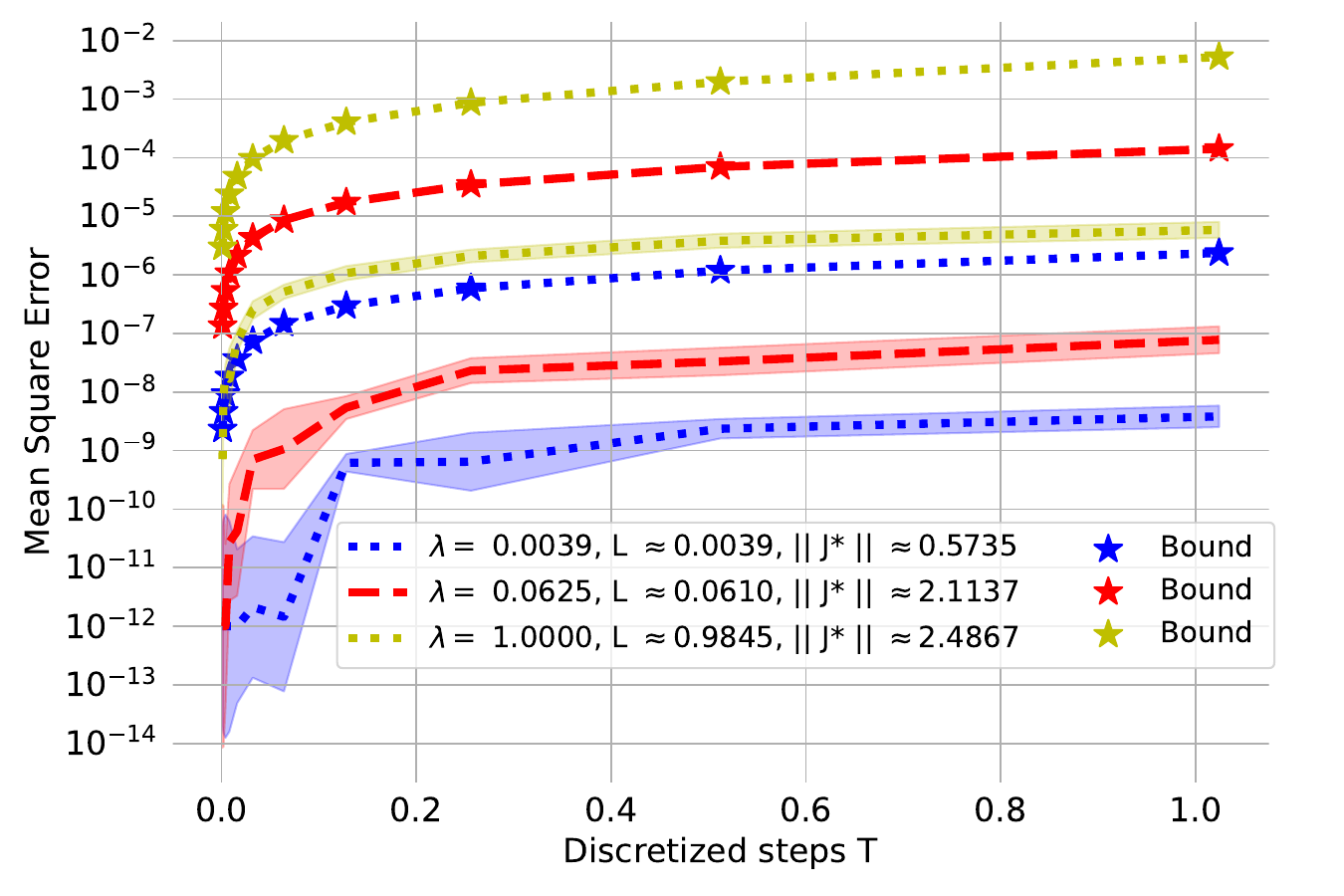}
        \caption*{State Conditioned Nonlinear Maps Regularize Action Weights}
        \label{fig:softrev-tensor}
    \end{minipage}
    \begin{minipage}[t]{0.45\textwidth}
        \centering
        \includegraphics[trim=0mm 0mm 0mm 0mm, clip=true, width=\textwidth]{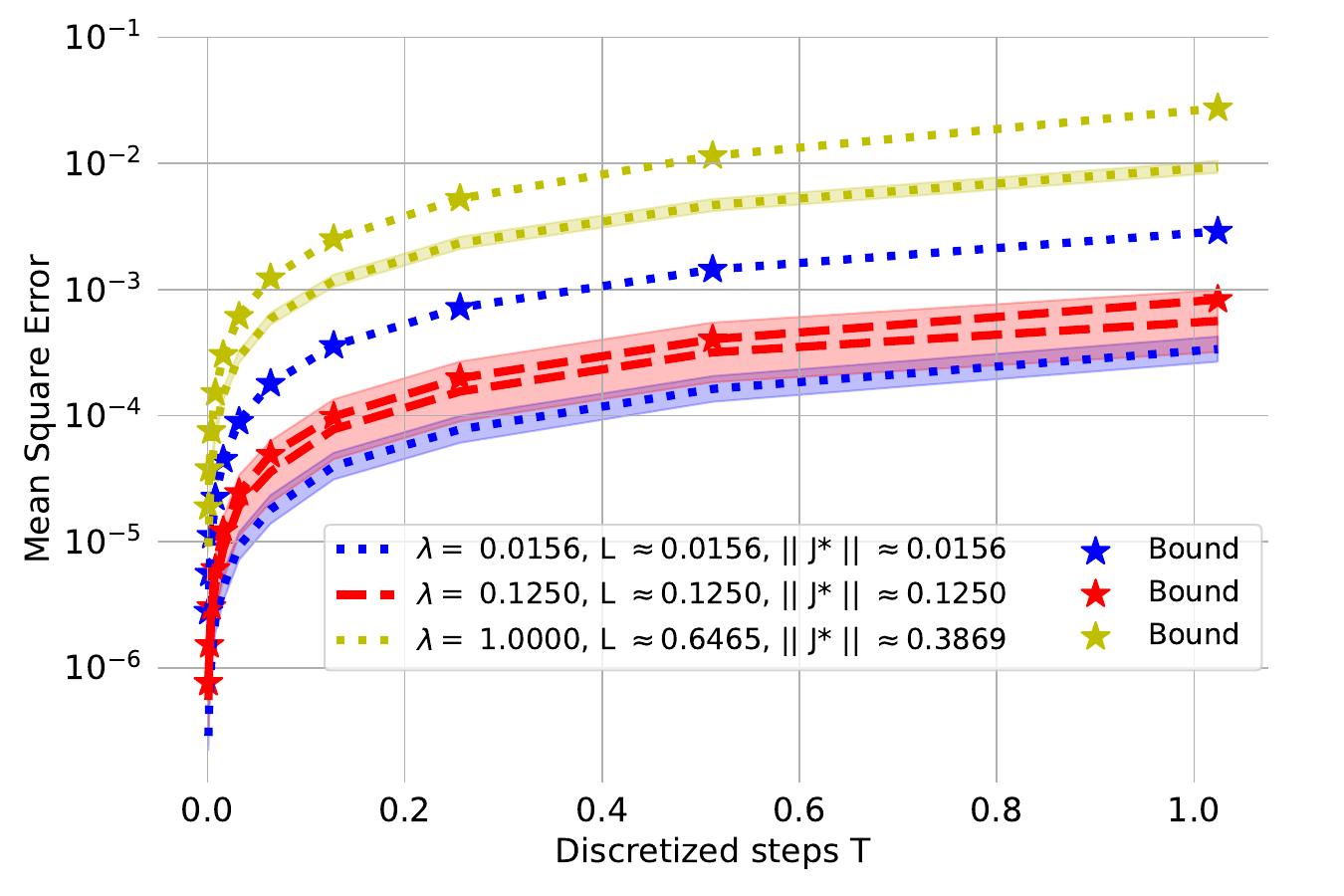}
        \caption*{Regularized Autoencoder}
        \label{fig:softrev-ae}
    \end{minipage}
\caption{Soft Reversibility Results with Autoencoder and SCN. We report the training Lipschits $\Lambda$ the estimated Lipschitz following regularizer method ($L$), and estimated maximum Jacobian over model $||J*||$. }
\label{fig:softrev-exp}
\end{figure}
\endgroup
\section{User Study Addtional Information}
This section briefly covers additional details to understand our user study experiments. These include how the training data was collected and how we mapped our mode-switching algorithm to the interfaced used in experiments. We show Likert Scale results in Figure~\ref{fig:pouringexperiments}

\subsection{Kinesthetic Demonstration}
We collected data using a Kinesthetic demonstration process. We divided trajectories into two sub-trajectories. In the first sub-trajectory, we would position the robot to grab the blue cup and end the demonstration. Starting from the position to grab the blue cup, we would close the gripper and then, in a single movement, pour the contents into the red box and position the robot arm over the green box. We never demonstrated the opening motion for the cup. We used an exponential moving average filter ($\gamma = 0.2$ weight) on the sensor joint positions to account for noise in the demonstration. We then took every third sample and calculated the finite difference between them. We used the finite difference approximation as the robot's input actions.

\subsection{Joy Stick Mapping}

In this section, we briefly discuss our joystick mapping. We used a \textit{Logitech G Extreme 3D Pro Joystick} for our experiments. When participants used mode switching, we mapped the first three modes to control different aspects of the cartesian space. For translation, the first mode controlled the X-Y plane concerning the table, mode 1 controlled the X-Z plane, and mode 2 controlled the Y-Z plane. Mode 3 controlled yaw and pitch, mode four controlled pitch and roll, and Mode 6 controlled yaw and roll. We define the modes listed concerning variable names in our code.

\begin{figure}[htp]
    \centering
    \includegraphics[width=0.5\textwidth]{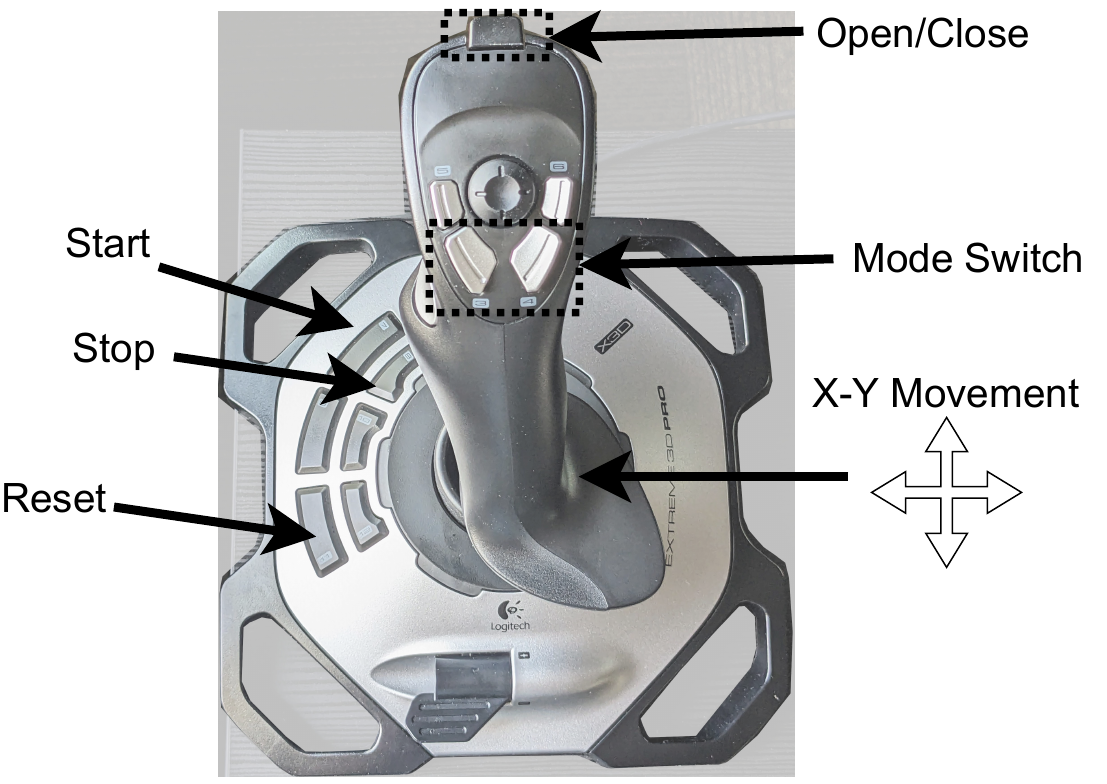}
    \caption{Logitech G Extreme 3D Pro Joystick. We visualize button mappings used in experiments}
    \label{fig:joystick-label}
\end{figure}

\begingroup
\begin{figure*}[htp] 
\vspace{0.5in}
\hspace{0.00in}
    \begin{minipage}[t]{0.35\textwidth}
        \centering
        \includegraphics[trim=0mm 0mm 0mm 7mm, clip=true, width=\textwidth]{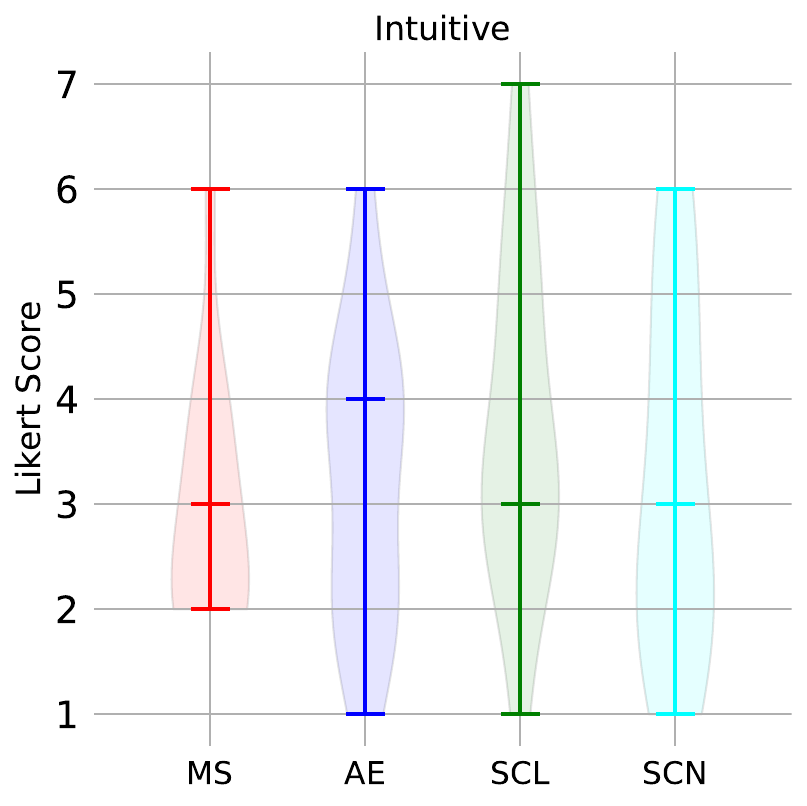}
        \caption*{Whether system controls were \textit{intuitive} }
        \label{fig:likert_intuitive}
    \end{minipage}
    \begin{minipage}[t]{0.35\textwidth}
    \centering
        \includegraphics[trim=0mm 0mm 0mm 7mm, clip=true, width=\textwidth]{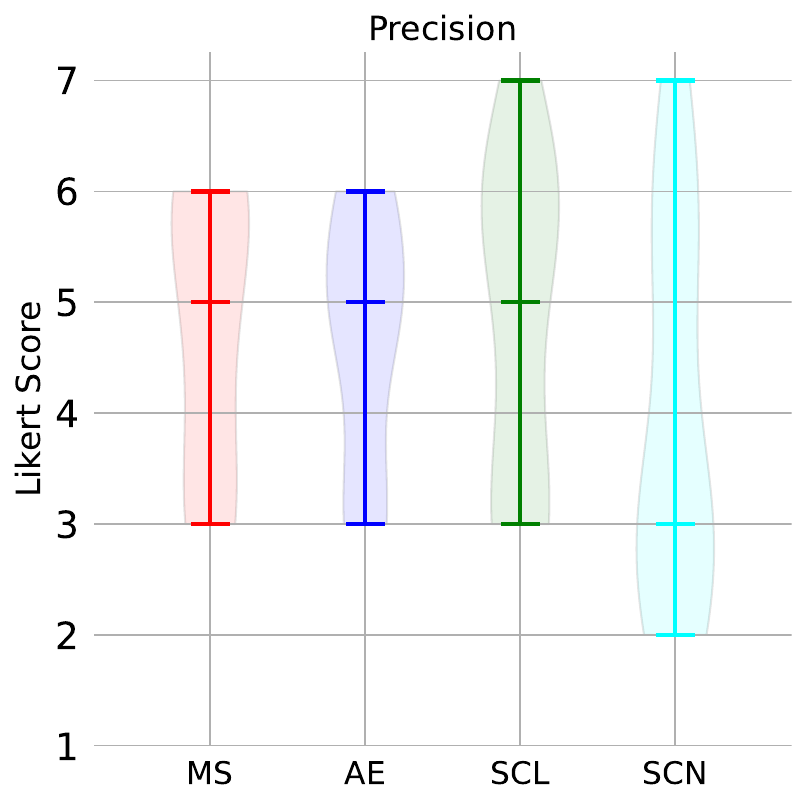}
        \caption*{How \textit{precise} commands were}
        \label{fig:likert_precise}
    \end{minipage}
    \begin{minipage}[t]{0.35\textwidth}
    \centering
        \includegraphics[trim=0mm 0mm 0mm 7mm, clip=true, width=\textwidth]{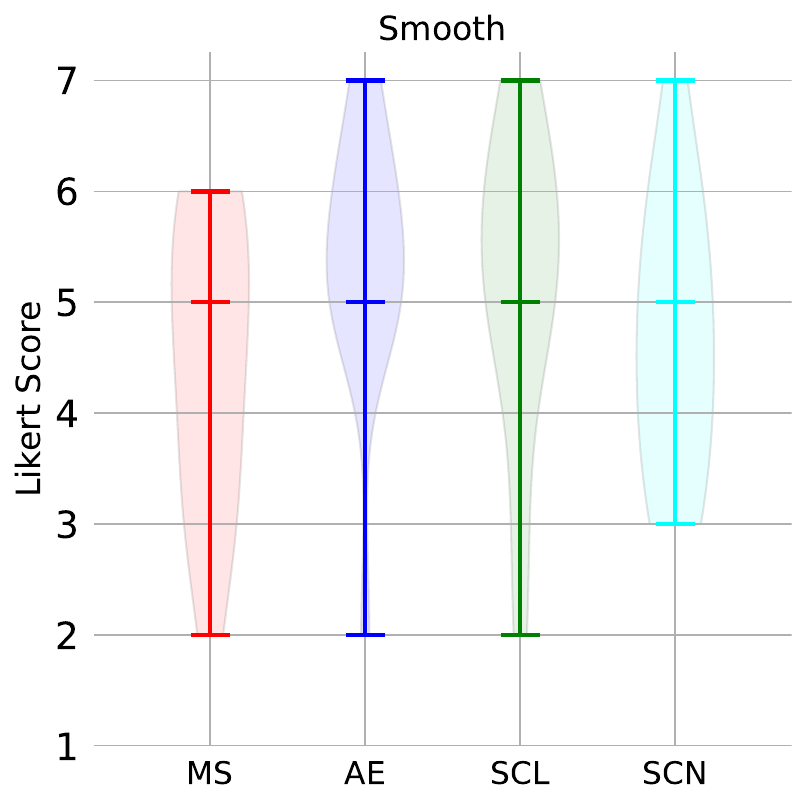}
        \caption*{\textit{Smoothness} between movements }
        \label{fig:likert_smooth}
    \end{minipage}
    \begin{minipage}[t]{0.35\textwidth}
    \centering
        \includegraphics[trim=0mm 0mm 0mm 7mm, clip=true, width=\textwidth]{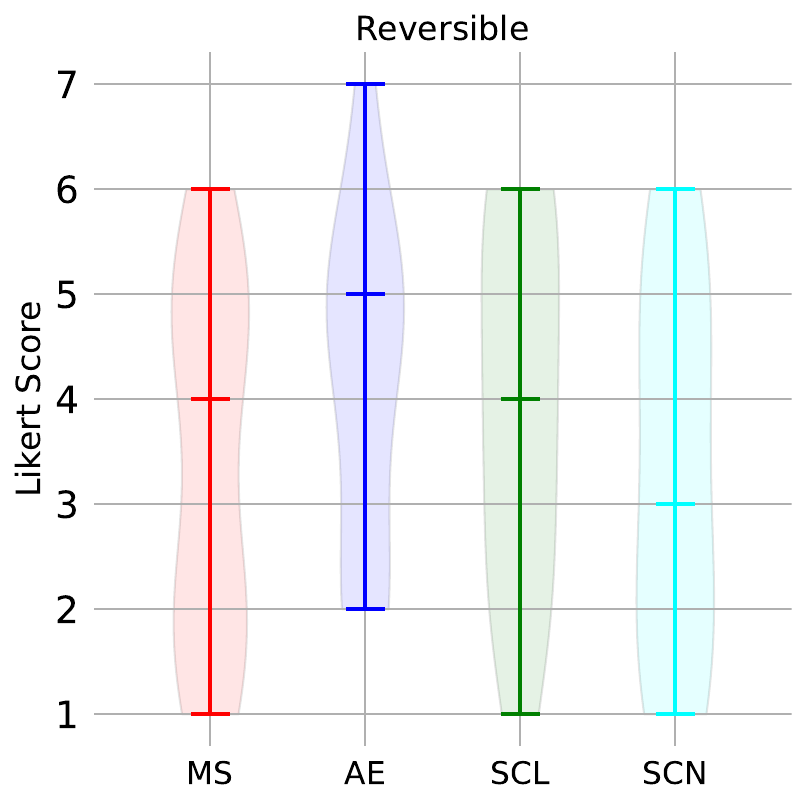}
        \caption*{Ability to \textit{Undo} movements}
        \label{fig:likert_undo}
    \end{minipage}
    \begin{minipage}[t]{0.35\textwidth}
    \centering
        \includegraphics[trim=0mm 0mm 0mm 7mm, clip=true, width=\textwidth]{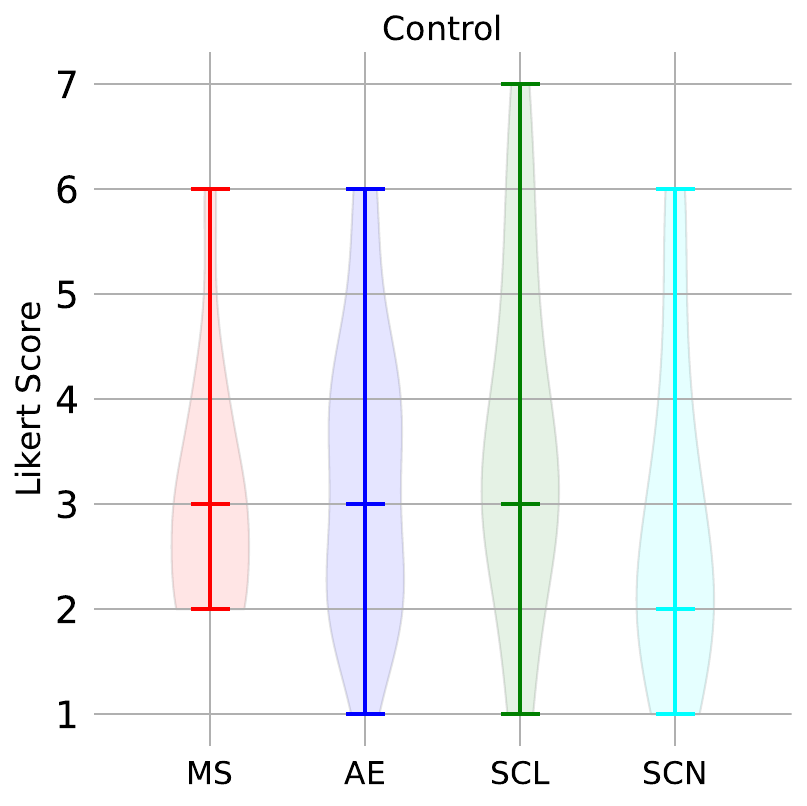}
        \caption*{Feeling in \textit{Control} of system }
        \label{fig:likert_control}
    \end{minipage}
\caption{Subjective Likert score results from User study. Statistical significance (p-value $<$ 0.05) was found between SCL and Tensor for Control and Smoothness between Mode switching and autoencoders. All other pairs were not significant. }
\label{fig:pouringexperiments}
\end{figure*}
\endgroup

\end{document}